\newcommand{\signalset}{\mathcal{X}}
\newcommand{\ntransf}{G}
\newcommand{\R}[1]{\mathbb{R}^{#1}}
\newcommand{\vect}[1]{\text{vec}(#1)}
\newcommand{\rangeA}{\mathcal{R}_A}
\newcommand{\rangeAg}{\mathcal{R}_{A_g}}
\newcommand{\nullA}{\mathcal{N}_A}
\newcommand{\rk}[1]{\text{rank}(#1)}
\newcommand{\nullAg}{\mathcal{N}_{A_g}}
\newcommand{\rev}[1]{\textcolor{black}{#1}}
\newcommand{\group}{\mathcal{G}}
\DeclareMathOperator*{\argmin}{arg\,min}
\theoremstyle{plain}
\newtheorem{theorem}{Theorem}[section]
\newtheorem{proposition}[theorem]{Proposition}
\newtheorem{lemma}[theorem]{Lemma}
\theoremstyle{definition}
\theoremstyle{remark}
\newcommand{\JT}[1]{{\color{black}{#1}}}
\title{Unsupervised Learning From Incomplete Measurements for Inverse Problems}
\author{%
  Juli\'an Tachella \\
  Laboratoire de Physique\\
  CNRS \& ENSL\\
  Lyon, France \\
  \texttt{julian.tachella@cnrs.fr} \\
   \And
   Dongdong Chen \\
   School of Engineering \\
   University of Edinburgh \\
   Edinburgh, UK \\
   \texttt{d.chen@ed.ac.uk} \\
   \And
   Mike Davies \\
   School of Engineering \\
   University of Edinburgh \\
   Edinburgh, UK \\
   \texttt{mike.davies@ed.ac.uk} \\
}
\begin{document}

\maketitle

\begin{abstract}
In many real-world inverse problems, only incomplete measurement data are available for training which can pose a problem for learning a reconstruction function. Indeed, unsupervised learning using a fixed incomplete measurement process is impossible in general, as there is no information in the nullspace of the measurement operator. This limitation can be overcome by using measurements from multiple operators. While this idea has been successfully applied in various applications, a precise characterization of the conditions for learning is still lacking. In this paper, we fill this gap by presenting necessary and sufficient conditions for learning the underlying signal model needed for reconstruction which indicate the interplay between the number of distinct measurement operators, the number of measurements per operator, the dimension of the model and the dimension of the signals. Furthermore, we propose a novel and conceptually simple unsupervised learning loss which only requires access to incomplete measurement data and achieves a performance on par with supervised learning when the sufficient condition is verified. We validate our theoretical bounds and demonstrate the advantages of the proposed unsupervised loss compared to previous methods via a series of experiments on various imaging inverse problems, such as accelerated magnetic resonance imaging, compressed sensing and image inpainting.
\end{abstract}

\section{Introduction}
\label{sec:intro}
In multiple sensing applications, we observe measurements $y\in\mathbb{R}^{m}$ associated with a signal $x\in\signalset\subset \mathbb{R}^{n}$, through the forward process
\begin{equation}
    y = Ax + \epsilon
\end{equation}
where $A\in \mathbb{R}^{n\times n}$ is a linear measurement operator and $\epsilon$ denotes the noise affecting the measurements.
This is the case of computed tomography~\cite{jin2017deep}, depth ranging~\cite{rapp2020advances} and non-line-of-sight imaging~\cite{o2018confocal} to name a few. Estimating $x$ from $y$ is generally an ill-posed inverse problem due to the incomplete operator 
$A$ with $m< n$ and the presence of noise. Knowledge of the signal model is required to make this problem well-posed.



In many cases, obtaining ground-truth reconstructions $x$ to learn the reconstruction function $y\mapsto x$ might be very expensive or even impossible. For example, in medical imaging, it is not always possible to obtain fully sampled images of patients as they require long acquisition times. In astronomical imaging, it is impossible to obtain ground-truth references due to physical limitations. In electron-microscopy imaging~\cite{gupta2020multi}, we can only measure 2D projections of a molecule.
In these settings, we can only access measurements $y$ for learning.
Moreover, if the measurement process $A$ is incomplete, it is fundamentally impossible to learn the model with only measurements $y$, as there is no information about the model in the nullspace of $A$. Thus, we end up with a chicken-and-egg problem: in order to reconstruct $x$ we need the reconstruction function, but to learn this function we require some reconstructed samples $x$. 

This fundamental limitation can be overcome by  using information from multiple incomplete sensing operators $A_1,\dots,A_{\ntransf}$, the general principle being that each operator can provide additional information about the signal model if it has a different nullspace. 
For example, in the image inpainting problem, Studer and Baraniuk~\cite{studer2012dictionary} used the fact that  the set of missing pixels may vary between observed images to learn a sparse dictionary model and reconstruct the images. Yang et al.~\cite{yang2015mixture} used multiple operators to learn a Gaussian mixture model in the context of hyperspectral imaging and high-speed video. Bora et al.~\cite{bora2018ambientgan} exploited this idea for learning a generative model in various imaging problems such as deblurring and compressed sensing. Matrix completion methods~\cite{candes2009exact} exploit a similar principle, as the missing entries of each column (i.e., signal) are generally different. Ideally we would like to learn the reconstruction function and signal model from only a small number of different measurement operators. We are thus motivated to determine typically how many such operators are required.

The problem can be formalized as follows. 
We first focus on the noiseless case to study the intrinsic identifiability problems associated to having only incomplete measurement data. The effect of noise will be discussed in~\Cref{sec: low-dim models}.
We assume that we observe a set of $N$ training samples $y_i$, where the $i$th signal is observed via $A_{g_i}\in \mathbb{R}^{m\times n}$, one of $\ntransf$ linear operators
, i.e.,
\begin{equation}
    y_i = A_{g_i} x_i
\end{equation}
where $g_i \in \{1,\dots,\ntransf\}$ and $i=1,\dots,N$.  While we assume that the measurement operator $A_{g_i}$ is known for all observed signals, it is important to note that we do not know a priori if two observations $(y_i,A_{g_i})$ and $(y_{i'},A_{g_{i'}})$ are related to the same signal \JT{as in Noise2Noise~\cite{lehtinen2018noise2noise}.} 
There are two natural questions regarding this learning problem:

\begin{description}
    \item[Q1. Signal Recovery] Is there a unique signal $x\in\signalset$ which verifies the measurements $y=A_gx$? In other words, is  the reconstruction function $f:(y,A_g)\mapsto x$ one-to-one?
    \item[Q2. Model Identification] Can we uniquely identify the signal model from measurement data alone obtained via \JT{the incomplete operators $A_1,\dots,A_{\ntransf}$}?
\end{description}
In general, there can be a unique solution for neither problem, just one or both. There might be a unique solution for signal recovery if the model is known, but it might be impossible to learn the model in the first place~(e.g., blind compressed sensing \cite{gleichman2011blind}). The converse is also possible, that is, uniquely identifying a model without having enough measurements per sample to uniquely identify the associated signal (e.g., subspace learning from rank-1 projections~\cite{chen2015rankone}).

The answer to \textbf{Q1} is well-known from generalized compressed sensing theory, see for example~\cite{bourrier2014fundamental}. Unique signal recovery is possible if the signal model is low-dimensional, i.e., if $A_g$ has  $m>2k$ measurements, where $k$ is the model dimension. On the other hand, \textbf{Q2} has been mostly studied in the context of matrix completion, where the set of signals is assumed to lie in a low-dimensional subspace of $\mathbb{R}^{n}$. Bora et al.~\cite{bora2018ambientgan} presented some results in the general setting, but only for the case where $G=\infty$ which is quite restrictive. In this paper, we provide sharp necessary and sufficient conditions which hold for any low-dimensional distribution (beyond linear subspaces) and only require a finite number of operators $G$.

If the conditions for signal recovery  and model identification are satisfied, we can expect to learn the reconstruction function from measurement data alone. We introduce a new unsupervised learning objective which can be used to learn the reconstruction function $f:(y,A_g)\mapsto x$, and provides performances on par with supervised learning when the sufficient conditions are met. The main contributions of this paper are as follows:

\begin{itemize}
\item We show that unsupervised learning from a finite number of incomplete measurement operators is only possible if the model is low-dimensional.  More precisely, we show that $m \geq n/ \ntransf$ measurements per operator are necessary for learning, and that for almost every set of $\ntransf$ operators, $m > k + n/G$ measurements per operator are sufficient.
\item We propose a new unsupervised loss for learning the reconstruction function that only requires incomplete measurement data, which empirically obtains a performance on par with fully supervised methods when the sufficient condition $m > k + n/G$ is met.
\item A series of experiments demonstrate that our bounds accurately characterize the performance of unsupervised approaches on synthetic and real datasets, and that the proposed unsupervised approach outperforms previous methods in various inverse problems.
\end{itemize}

\subsection{Related Work}

\paragraph{Blind Compressed Sensing}
The fundamental limitation of failing to learn a signal model from incomplete (compressed) measurements data goes back to  blind compressed sensing~\cite{gleichman2011blind} for the specific case of models exploiting sparsity on an orthogonal dictionary. In order to learn the dictionary from incomplete data, \cite{gleichman2011blind} imposed additional constraints on the dictionary, while some subsequent papers~\cite{silva2011blind,aghagolzadeh2015new} removed these assumptions by proposing to use multiple operators $A_g$ as studied here. This paper can be seen as a generalization of such results to more general signal models. 

\paragraph{Matrix Completion}
Matrix completion consists of inferring missing entries of a data matrix $Y = [y_1,\dots,y_N]$, whose columns are generally inpainted samples from a low-dimensional distribution, i.e., $y_i = A_{g_i}x_i$ where the operators $A_{g_i}$ randomly select a subset of $m$ entries of the signal $x_i$. This problem can be viewed as the combination of  model identification, i.e., identifying the low-rank subspace that the columns of $X=[x_1,\dots,x_N]$ belong to, and signal recovery, i.e., reconstructing the individual columns.  Assuming that the samples belong to a $k$-dimensional subspace can be imposed by recovering a rank-$k$ signal matrix $X$ from $Y$. If the columns are sampled via $\ntransf$ sufficiently different patterns $A_{g_i}$ with the same number of entries $m$, a sufficient condition~\cite{pimentel2016characterization} for uniquely recovering almost every subspace model is\footnote{A larger number of measurements $m =\mathcal{O}(k\log n)$ is required to guarantee a stable recovery when the number of patterns $\ntransf$ is large~\cite{candes2009exact}.} $m \geq  (1-1/\ntransf)k + n/\ntransf$. 

A similar condition was  shown in~\cite{pimentel2016information} for the case of \emph{high-rank} matrix completion~\cite{eriksson2012high}, which arises when the samples $x_i$ belong to a union of $k$-dimensional subspaces.
We show that model identification is possible for almost every set of $\ntransf$ operators with $m > k + n/\ntransf$ measurements, however the  theory presented here goes beyond linear subspaces, being also valid for general low-dimensional models. 

\paragraph{Deep Nets for Inverse Problems}
Despite providing very competitive results, most deep learning based solvers require measurements and signal pairs $(x_i,y_i)$ (or at least clean signals $x_i$) in order to learn the reconstruction function $y\mapsto x$ from incomplete measurements. A first step to overcome this limitation is due to Noise2Noise~\cite{lehtinen2018noise2noise}, where the authors show that it is possible to learn from only noisy samples. However, their ideas only apply to denoising settings where there is a trivial nullspace, as the operator $A$ is the identity matrix.  \rev{This approach was extended in~\cite{xia2019training} to the case where two measurements are observed per signal, each associated with a different operator.}
Yaman et al.~\cite{yaman2020mri} and Artifact2Artifact~\cite{liu2020rare} empirically showed that it is possible to exploit different measurement operators to learn the reconstruction function in the context of magnetic resonance imaging (MRI). AmbientGAN~\cite{bora2018ambientgan} proposed to learn a signal distribution from only incomplete measurements using multiple forward operators, however they only provide reconstruction guarantees for the case where an infinite number of operators $A_g$ is available\footnote{Their result relies on the Cram\'er-Wold theorem, which is discussed in~\Cref{sec: highdim}.}, a condition that is not met in practice.  

\rev{Another line of work focuses on learning using measurements from a single incomplete operator. The works in~\cite{metzler2018unsupervised,zhussip2019training} use the large system limit properties of random compressed sensing operators to learn from measurements alone. The equivariant imaging approach~\cite{chen2021equivariant,chen2022robust} leverages invariance of the signal set to a group of transformations to learn from general incomplete operators.}

\section{Signal Recovery Preliminaries} \label{subsec:CS preliminary}

We denote the nullspace of $A$ as $\nullA$. Its complement, the range space of the pseudo-inverse $A^{\dagger}$, is denoted as $\rangeA$, where $\rangeA \oplus \nullA = \R{n}$ and $\oplus$ denotes the direct sum. Throughout the paper, we assume that the signals are sampled from a measure $\mu$ supported on the signal set $\signalset\subset \R{n}$. Signal recovery has a unique solution if and only if the forward operator $x\mapsto y$ is one-to-one, i.e., if for every pair of signals $x_1,x_2\in\signalset$ where $x_1\neq x_2$ we have that 
\begin{align}
    Ax_1 \neq Ax_2 \\
    A(x_1-x_2) \neq 0 
\end{align}
In other words, there is no vector $x_1-x_2\neq 0$ in the nullspace of $A$.
It is well-known that this is only possible if the signal set $\signalset$ is low-dimensional. There are multiple ways to define the notion of dimensionality of a set in $\R{n}$. In this paper, we focus on the upper box-counting dimension which is defined for a compact subset $S\subset\R{n}$ as
\begin{equation}
   \text{boxdim}(S) = \lim \sup_{\epsilon\to0}  \frac{\log N(S,\epsilon)}{-\log \epsilon}
\end{equation}
where $N(S,\epsilon)$ is the minimum number of closed balls of radius $\epsilon$ with respect to the norm $\|\cdot\|$ that are required to cover $S$. This definition of dimension covers both well-behaved models such as compact manifolds and more general  low-dimensional sets.  The mapping $x\mapsto y$ is one-to-one for almost every forward operator $A\in\R{m\times n}$ if~\cite{sauer1991embedology}
\begin{equation}\label{eq: one-to-one}
    m>\text{boxdim}(\Delta\signalset)
\end{equation}
where $\Delta\signalset$ denotes the normalized secant set which is defined as
\begin{equation}
   \Delta\signalset = \{ \Delta x \in \R{n} | \; \Delta x = \frac{x_2 -x_1}{\| x_2 -x_1\|},  x_1,x_2\in\signalset, 
   \; x_2\neq x_1 \}.
\end{equation}
The term \emph{almost every} means that the complement has Lebesgue measure 0 in the space of linear measurement operators $\R{m\times n}$.  The normalized secant set  of models of dimension $k$  generally has dimension $2k$, requiring  $m>2k$ measurements to ensure signal recovery.
For example, the  union of $k$-dimensional subspaces requires at least $2k$ measurements\footnote{While the bound in~\Cref{eq: one-to-one} guarantees \emph{unique} signal recovery, more measurements  (e.g., an additional factor of $\mathcal{O}(\log n)$ measurements) are typically necessary in order to have a \emph{stable} inverse $f:y\mapsto x$, i.e., possessing a certain Lipschitz constant. A detailed discussion can be found for example in~\cite{bourrier2014fundamental}.}  to guarantee one-to-oneness~\cite{blumensath2009uos}. This includes well-known models such as $k$-sparse models (e.g., convolutional sparse coding~\cite{bristow2013fast}) and co-sparse models (e.g., total variation~\cite{rudin1992nonlinear}).
In the regime $k<m\leq 2k$, the subset of signals where one-to-oneness fails is at most $(2k-m)$-dimensional~\cite{sauer1991embedology}. 

\section{Uniqueness of Any Model?} \label{sec: highdim}
A natural first question when considering uniqueness of the model is: can we recover any \JT{probability} measure $\mu$ observed via forward operators $A_1,\dots,A_G$, even in the case where \JT{its support} $\signalset$ is the full $\mathbb{R}^{n}$? We show that, in general, the answer is no.

Uniqueness can be analysed from the point of view of the characteristic function of $\mu$, defined as $\varphi(w) = \mathbb{E}\{e^{\mathrm{i}w^{\top}x}\}$ where the expectation is taken with respect to $\mu$ and $\mathrm{i}=\sqrt{-1}$ is the imaginary unit. If two distributions have the same characteristic function, then they are necessarily the same almost everywhere. Each forward operator provides information about a subspace of the characteristic function  as
\begin{align}
\label{eq: stat}
\mathbb{E}\{e^{\mathrm{i}w^{\top}A_g^{\dagger}y}\} 
&= \mathbb{E}\{e^{\mathrm{i}w^{\top}A_g^{\dagger}A_gx}\} \\
&= \mathbb{E}\{e^{\mathrm{i}(A_g^{\dagger}A_g w)^{\top}x}\} \\
&= \varphi(A_g^{\dagger}A_gw)
\end{align}
where $A_g^{\dagger}A_g$ is a linear projection onto the subspace $\rangeAg$.
Given that $m<n$, the characteristic function is only observed in the subspaces $\rangeAg$ for all $g\in \{1,\dots,\ntransf\}$. For any finite number of operators, the union of these subspaces does not cover the whole $\mathbb{R}^{n}$, and hence there is loss of information, i.e., the signal model cannot be uniquely identified. 

In the case of an infinite number of operators $G=\infty$,  
the Cram\'er-Wold theorem guarantees uniqueness of the signal distribution if all possible one dimensional projections ($m=1$) are available~\cite{cramer1936some,bora2018ambientgan}. 
However, in most practical settings we can only access a finite number of  operators and many distributions will be non-identifiable.

\section{Uniqueness of Low-Dimensional Models} \label{sec: low-dim models}
Most models appearing in signal processing and machine learning are assumed to be approximately low-dimensional, with a dimension $k$ which is much lower than the ambient dimension $n$. As discussed in~\Cref{subsec:CS preliminary}, the low-dimensional property is the key to obtain stable reconstructions, e.g., in compressed sensing. 
In the rest of the paper, we impose the following assumptions on the model:
\begin{enumerate}
    \item[\textbf{A1}] The signal set $\signalset$ is either 
    \begin{enumerate}
        \item  A bounded set with box-counting dimension $k$.
        \item An unbounded conic set whose  intersection with the unit sphere has box-counting dimension $k-1$.
    \end{enumerate}
\end{enumerate}

This assumption has been widely adopted in the inverse problems literature, as it is a necessary assumption to guarantee signal recovery. Our definition of dimension covers most 
models used in practice, such as simple subspace models, union of subspaces (convolutional sparse coding models, $k$-sparse models), low-rank matrices and compact manifolds. It is worth noting that dimension is a property of the dataset and thus independent of the specific algorithm used for learning.


In the rest of the paper, we focus on conditions for the identification of the support $\signalset$ instead of the signal distribution $\mu$, due to the following observation: if there is a one-to-one reconstruction function (which happens for almost every $A$ with $m>2k$ as explained in~\cref{subsec:CS preliminary}), uniqueness of the support implies uniqueness of $\mu$.
 If $\signalset$ is known and there is a measurable one-to-one mapping from each observed measurement $y$ to $\signalset$, then it is possible to obtain $\mu$ as the push-forward of the measurement distribution.

Before delving into the main theorem, we present a simple example which provides intuition of how a low-dimensional model can be learned via multiple projections $A_g$:

\paragraph{Learning a one-dimensional subspace} \label{ex: 1d line}
Consider a toy signal model with support $\signalset\subset\mathbb{R}^3$ which consists of a one-dimensional linear subspace spanned by $\phi = [1,1,1]^{\top}$, and $G=3$ measurement operators $A_1,A_2,A_3\in \mathbb{R}^{2\times 3}$ which project the signals into the $x(3)=0$, $x(2)=0$ and $x(1) = 0$ planes respectively, where $x(i)$ denotes the $i$th entry of the vector $x$. 
The example is illustrated in \Cref{fig:toy_line}. The first operator $A_1$ imposes a constraint on $\signalset$, that is, every $x\in\signalset$ should verify $x(1)-x(2)=0$. Without more operators providing additional information about $\signalset$, this constraint yields a plane containing $\signalset$, and there are infinitely many one-dimensional models that would fit the training data perfectly. However, the additional operator $A_2$ adds the constraint $x(2)-x(3)=0$, which is sufficient to uniquely identify $\signalset$ as 
\begin{align*}
    \hat{\signalset} = \signalset =\{v \in \mathbb{R}^3 | \;  v(1) - v(2) = v(2) - v(3) = 0 \}
\end{align*}
is the desired 1-dimensional subspace.
Finally, note that in this case the operator $A_3$ does not restrict the signal set further, as the constraint $x(1)-x(3)=0$ is verified by the other two constraints.

\begin{figure*}[h]
\centering
\includegraphics[width=1\textwidth]{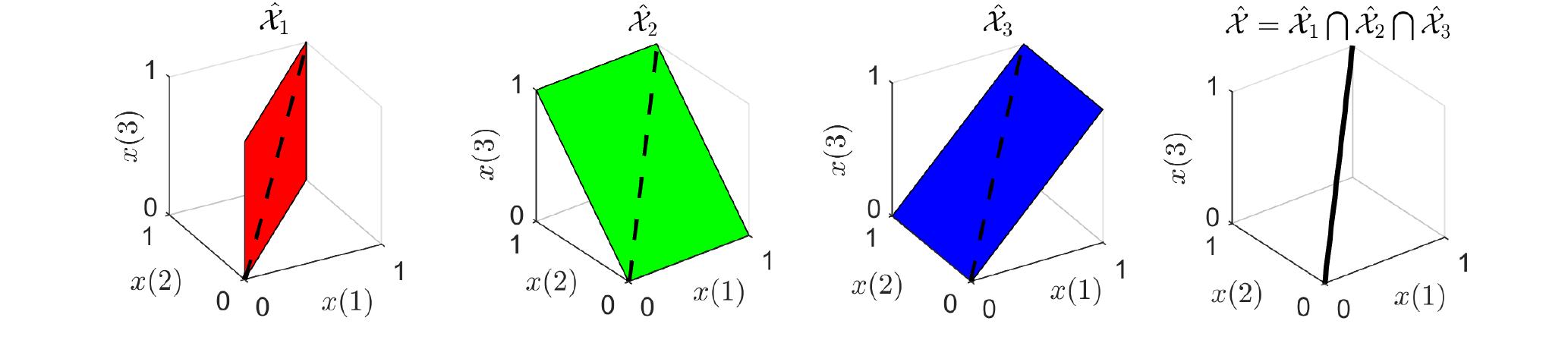}
\caption{Toy example of a 1-dimensional subspace embedded in $\R{3}$. If we only observe the projection of the signal set into the plane $x(3)=0$, then there are infinite possible lines that are consistent with the measurements (red plane). Adding the projection into the $x(1)=0$ plane, allows us to uniquely identify the signal model.}
\label{fig:toy_line}
\end{figure*}

The ideas from the one-dimensional subspace example can be generalized and formalized as follows: for each projection $A_g$, we can constrain the model support $\signalset$ by considering the  set \begin{equation} \label{eq: constraint Ag}
    \hat{\signalset}_g = \{v \in \mathbb{R}^{n} | \; v = \hat{x}_g + u, \; \hat{x}_g\in \signalset, \; u\in \nullAg \}
\end{equation}
which has dimension at most $n-(m-k)$. Note that the true signal model is a subset of $\hat{\signalset}_g$. The inferred signal set belongs to the intersection of these sets 
\begin{equation}
    \hat{\signalset}=\bigcap_{g\in\group} \hat{\signalset}_g 
\end{equation}
which can be expressed concisely as
\begin{equation}
    \label{eq:inferred set}
  \hat{\signalset} =  \{ v\in \mathbb{R}^{n} | \; A_g(x_{g} -v) = 0, \; g=1,\dots,\ntransf, x_1,\dots,x_{\ntransf}\in \signalset \}
\end{equation} 
Even though we have derived the set $\hat{\signalset}$ from a purely geometrical argument, the constraints in \Cref{eq:inferred set} also offer a simple algebraic intuition: the inferred signal set consists of the points $v\in \R{n}$ which verify
the following system of equations 
\begin{equation} \label{eq: algo_x}
    \begin{bmatrix}
     A_1 \\ 
     \vdots \\
     A_{\ntransf} 
    \end{bmatrix} v =   \begin{bmatrix}
     A_1x_1 \\ 
     \vdots \\
     A_{\ntransf} x_{\ntransf}
    \end{bmatrix}.
\end{equation}
for all possible choices of $\ntransf$ points $x_1,\dots,x_{\ntransf}$ in $\signalset$.
In other words, given a dataset of $N$ incomplete measurements $\{A_{g_i}x_i\}_{i=1}^{N}$, it is possible to build $\hat{\signalset}$ by trying all the possible combinations of $\ntransf$ samples\footnote{Despite providing a good intuition, this procedure for estimating $\signalset$ is far from being practical as it would require an infinite number of observed samples if the dimension of the signal set is not trivial $k>0$.} and keeping only the points $v$ which are the solutions of~\Cref{eq: algo_x}.

It is trivial to see that $\signalset\subseteq \hat{\signalset}$, but when can we guarantee $\signalset=\hat{\signalset}$? 
As in the previous toy example, if there are not enough constraints, e.g., if we have a single $A$ and no additional measurement operators, the inferred set will have a dimension larger than $k$, containing undesired aliases. In particular, we have the following lower bound on the minimum number of measurements:

\begin{proposition}[Theorem 1 in~\cite{chen2021equivariant}] \label{prop: necessary multA}
\rev{A necessary condition for model uniqueness \JT{from the measurement sets $\{A_g \signalset\}_{g=1}^{\ntransf}$} is that \begin{equation}\label{eq: rank condition}
    \rk{\begin{bmatrix}
    A_{1} \\
    \vdots \\
    A_{\ntransf}
    \end{bmatrix}} = n
\end{equation} and thus $m\geq n/\ntransf$.}
\end{proposition}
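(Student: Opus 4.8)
The plan is to prove \Cref{eq: rank condition} by contraposition: write $\bar A = [A_1^{\top}\,\cdots\,A_{\ntransf}^{\top}]^{\top}\in\R{\ntransf m\times n}$ for the stacked matrix and suppose $\rk{\bar A}<n$; I will exhibit a set $\signalset'\neq\signalset$, still admissible under Assumption~A1 with the same box‑counting dimension $k$, such that $A_g\signalset'=A_g\signalset$ for every $g=1,\dots,\ntransf$. Since the measurement sets $\{A_g\signalset\}_{g=1}^{\ntransf}$ are then shared by two distinct models, the model cannot be identified. The bound $m\geq n/\ntransf$ is immediate from the rank hypothesis of the statement, because $\bar A$ has $\ntransf m$ rows, so $n=\rk{\bar A}\leq\ntransf m$.

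The construction starts from the remark that $\rk{\bar A}<n$ is equivalent to the existence of a nonzero vector $w\in\bigcap_{g=1}^{\ntransf}\nullAg$, i.e.\ a direction invisible to every operator. When $\signalset$ is bounded (case A1a), take $\signalset'=\signalset+w$; then $A_g(x+w)=A_gx$ gives $A_g\signalset'=A_g\signalset$, the translation is an isometry so $\mathrm{boxdim}(\signalset')=\mathrm{boxdim}(\signalset)=k$, and $\signalset'\neq\signalset$ because a nonempty bounded set is never invariant under a nonzero translation (iterating $\signalset+w=\signalset$ would force unboundedness). For the conic case (A1b) I would split on whether $\signalset+\mathrm{span}(w)=\signalset$: if not, the same translation $\signalset'=\signalset+w$ works verbatim; if so, then each $x\in\signalset$ decomposes as $x=x_\perp+sw$ with $x_\perp=x-sw\in\signalset\cap\{v:\langle v,w\rangle=0\}$ (using $\signalset+\mathrm{span}(w)=\signalset$), so $\signalset$ is a cylinder over $\signalset\cap w^{\perp}$ along $w$, and taking $\signalset'=\signalset\cap w^{\perp}$ gives $A_g\signalset'=A_g\signalset$ for all $g$ since $A_gw=0$, while $\signalset'\subsetneq\signalset$ because $w\in\signalset\setminus w^{\perp}$; moreover $\signalset'$ is again a cone whose intersection with $\Sp{n-1}$ has box‑counting dimension at most $k-1$, hence it remains admissible. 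In all cases a genuine alternative admissible model with identical measurement sets has been produced, establishing non‑identifiability and thus the necessity of \Cref{eq: rank condition}.

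The step that needs the most care is the conic dichotomy: one must rule out that translating or slicing accidentally reproduces $\signalset$, and verify that the substitute $\signalset'$ still satisfies Assumption~A1, so that the failure of uniqueness genuinely occurs within the model class rather than as an artefact of enlarging it. The remaining ingredients — the equivalence ``$\rk{\bar A}<n$ iff there is a nonzero common‑nullspace vector $w$'' and the final inequality $n\leq\ntransf m$ — are routine linear algebra.
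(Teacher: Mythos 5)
Your overall strategy --- extract a nonzero vector $w\in\bigcap_{g=1}^{\ntransf}\nullAg$ from the rank deficiency and use it to build a second model with identical measurement sets --- is sound, and it is a genuinely different route from the paper's. The paper works at the level of the inferred set $\hat{\signalset}$ and the stacked linear system: uniqueness requires that $A_g v = A_g x_g$ for $g=1,\dots,\ntransf$ force $v=x_1=\dots=x_{\ntransf}$, and a rank-deficient stack admits multiple solutions $v$ for every choice of $x_1,\dots,x_{\ntransf}$, so $\hat{\signalset}\supsetneq\signalset$; that is the whole argument. Your version proves the stronger statement that two \emph{admissible} models are indistinguishable from their measurements, but that extra strength is exactly where the gap lies. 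The bounded case and the final inequality $n=\rk{\bar A}\leq \ntransf m$ are correct.

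The gap is in the first branch of the conic case. If $\signalset$ is a cone and $w\neq 0$, the translate $\signalset+w$ is in general not a cone, so it fails Assumption A1(b) and is not an admissible alternative --- violating the requirement you yourself identify as the delicate point. In addition, the claim that the translation argument works ``verbatim'' is not justified: the bounded-case proof that $\signalset+w\neq\signalset$ iterates the translation to contradict boundedness, which says nothing for an unbounded cone (one must instead use the scaling structure: for a cone, $\signalset+w=\signalset$ already implies $\signalset+\mathrm{span}(w)=\signalset$, so your branch hypothesis does the work, but this needs to be said). Both defects are repaired by replacing the translation with the shear $T_c:x\mapsto x+(c^{\top}x)w$ for suitable $c$: since $T_c=I+wc^{\top}$ is linear and invertible when $c^{\top}w\neq-1$, it maps cones to cones and preserves box-counting dimension; $A_gT_c=A_g$ because $A_gw=0$; and if $T_c\signalset=\signalset$ for every $c$ then $x+tw\in\signalset$ for all $x\in\signalset\setminus\{0\}$ and all $t\in\mathbb{R}$, which is precisely your second branch. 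In that second branch, the assertion $w\in\signalset$ implicitly uses closedness of $\signalset$; it is cleaner to note that $\signalset\subseteq w^{\perp}$ together with $\signalset+\mathrm{span}(w)=\signalset$ forces $w=0$, so some $x\in\signalset$ lies outside $w^{\perp}$ and hence $\signalset\cap w^{\perp}\subsetneq\signalset$. With these repairs your proof is complete and, if anything, more careful than the one in the paper.
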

\begin{proof}
In order to have model uniqueness, the system in~\cref{eq: algo_x} should only admit a solution  if $v=x_1=\dots=x_\ntransf$. \rev{If the rank condition in~\cref{eq: rank condition} is not satisfied, there is more than one solution for any choice of $x_1,\dots,x_\ntransf\in\signalset$.}  
\end{proof}

Note that this necessary condition does not take into account the dimension of the model. As discussed in~\Cref{sec: highdim}, a sufficient condition for model uniqueness must depend on the dimension of the signal set $k$. Our main theorem shows that $k$ additional measurements per operator are sufficient for model identification:


\begin{theorem} \label{theo: multiple op}
For almost every set of $\ntransf$ mappings $A_1,\dots,A_{\ntransf}\in \mathbb{R}^{m\times n}$, \rev{under assumption \textup{\textbf{A1}}} the signal model $\signalset$ can be uniquely recovered \JT{from the measurement sets $\{A_g \signalset\}_{g=1}^{\ntransf}$} if the number of measurements \JT{per operator} verifies $m> k + n/\ntransf$.
\end{theorem}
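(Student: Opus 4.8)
The plan is to show that the set of operator tuples $(A_1,\dots,A_{\ntransf})\in(\R{m\times n})^{\ntransf}$ with $\hat{\signalset}\neq\signalset$ has Lebesgue measure zero. Since $\signalset\subseteq\hat{\signalset}$ always holds, by~\cref{eq:inferred set} it suffices to rule out \emph{aliases}: tuples $(v,x_1,\dots,x_{\ntransf})$ with $v\notin\signalset$, $x_g\in\signalset$ for all $g$, and $A_g(v-x_g)=0$ for all $g$. The key observation is that, since $x_g\in\signalset$ but $v\notin\signalset$, every $v-x_g$ is nonzero, so $e_g:=(v-x_g)/\|v-x_g\|$ is a well-defined unit vector lying in $\nullAg$; an alias therefore forces each of the $\ntransf$ separate blocks $A_g$ to satisfy an independent codimension-$m$ condition, which is exactly why additional operators help. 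The whole difficulty is that these conditions are linked through the single ambient-dimensional vector $v$, which --- unlike the normalized secant vectors in the one-operator embedding result behind~\cref{eq: one-to-one} --- is not constrained to a low-dimensional set.

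I would bound the measure by a covering/transversality argument. Fix integers $R,j,L\geq 1$ and restrict attention to aliases with $\|v\|\leq R$, $\|x_g\|\leq R$ and $\min_g\|v-x_g\|\geq 1/j$, realized by operators with $\|A_g\|\leq L$; as any fixed alias of any fixed operator tuple belongs to one such class, it is enough to handle each $(R,j,L)$ and take a countable union at the end. Cover the set of admissible witnesses $(v,x_1,\dots,x_{\ntransf})$ by balls of radius $\epsilon$: for any $\eta>0$ and all small enough $\epsilon$, the ball $\{v:\|v\|\leq R\}$ is covered by at most $C_R\,\epsilon^{-n}$ balls, while each $x_g\in\signalset\cap\{\|x\|\leq R\}$ is covered by at most $C_R\,\epsilon^{-(k+\eta)}$ balls --- in case \textbf{A1}(a) directly from $\text{boxdim}(\signalset)=k$, and in case \textbf{A1}(b) by decomposing the truncated cone radially into dilated copies of $\signalset\cap\{1\leq\|x\|\leq 2\}$, which has box-counting dimension $k$. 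Hence the witness set is covered by $N(\epsilon)\leq C_R\,\epsilon^{-n-\ntransf(k+\eta)}$ balls.

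For a witness $(v_0,x_{0,1},\dots,x_{0,\ntransf})$ at the centre of such a ball, any operator tuple with $\|A_g\|\leq L$ realizing \emph{some} alias in that ball satisfies $\|A_g(v_0-x_{0,g})\|\leq 2L\epsilon$, hence $\|A_g\,e_{0,g}\|\leq 2jL\epsilon$ with $e_{0,g}=(v_0-x_{0,g})/\|v_0-x_{0,g}\|$ (here the lower bound $\min_g\|v-x_g\|\geq 1/j$ enters). Since $A_g\mapsto A_g\,e_{0,g}$ is a surjective linear map onto $\R{m}$ and $\|e_{0,g}\|=1$, the set of such $A_g$ inside $\{\|A_g\|\leq L\}$ has measure $\order{(jL\epsilon)^{m}}$, and by independence of the blocks the admissible tuples have measure $\order{(jL\epsilon)^{m\ntransf}}$. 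Multiplying by $N(\epsilon)$ bounds the measure of bad tuples in this class by $C_R(jL)^{m\ntransf}\,\epsilon^{\,m\ntransf-n-\ntransf k-\ntransf\eta}$. Since $m>k+n/\ntransf$ is precisely $m\ntransf-n-\ntransf k>0$, choosing $\eta$ small makes the exponent positive, and letting $\epsilon\to 0$ shows the class has measure zero. Taking the union over $R,j,L\in\mathbb{N}$ gives $\hat{\signalset}=\signalset$ for almost every operator tuple.

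The step I expect to be the genuine obstacle is the coupling just described: $v$ sweeps an $n$-dimensional set, so the naive budget --- witness dimension $\approx n+\ntransf k$ against $m\ntransf$ linear constraints on the operators --- closes only because the covering argument converts "$v$ is $n$-dimensional" into the harmless factor $\epsilon^{-n}$, and because the degenerate regime $\min_g\|v-x_g\|\to 0$ (where the constraint on $A_g$ degenerates) is quarantined by the parameter $j$. The remaining ingredients are routine: reduction to compact pieces, covering-number bounds for $\signalset$ from \textbf{A1}, and the elementary fact that $\{A_g:\|A_g\|\leq L,\ \|A_g e\|\leq\delta\}$ has measure $\order{\delta^{m}}$ uniformly over unit vectors $e$. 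Note also that $m>k+n/\ntransf$ forces $m\ntransf>n$, so the rank condition of~\cref{prop: necessary multA} is automatically satisfied, though it is not used above.
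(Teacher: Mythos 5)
Your proposal is correct and follows essentially the same route as the paper: the paper reduces to the same alias/witness set $\{(v,x_1,\dots,x_{\ntransf}): v\notin\signalset,\ x_g\in\signalset,\ A_g(v-x_g)=0\}$ of dimension at most $n+\ntransf k$, rewrites the constraint as a linear condition on $\alpha=[\vect{A_1},\dots,\vect{A_{\ntransf}}]$ of rank $m\ntransf$, and invokes Lemmas 4.5--4.6 of Sauer et al.\ whose content is exactly the covering-plus-transversality estimate $N(\epsilon)\cdot\order{\epsilon^{m\ntransf}}\to 0$ that you carry out by hand. The only difference is presentational: you prove the technical lemma inline (including the explicit quarantine of the degenerate regime $\|v-x_g\|\to 0$ via the parameter $j$, and the radial decomposition for the conic case, which the paper handles by a homogeneity/rescaling argument), whereas the paper cites it as a black box.
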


The proof is included in~\Cref{sec:proofs}. If we have a large number of independent operators $\ntransf\geq n$, \Cref{theo: multiple op} states that only $m>k+1$ measurements are sufficient for model identification, which is slightly smaller (if the model is not trivial, i.e., $k>1$) than the number of measurements typically needed for signal recovery $m>2k$. In this case, it is possible to uniquely identify the model, without necessarily having a unique reconstruction of each observed signal. However, as discussed in~\Cref{subsec:CS preliminary}, for $k<m\leq 2k$, the subset of signals which cannot be uniquely recovered is at most $(2k-m)$-dimensional.




\paragraph{Operators with Different Number of Measurements} \label{subsec: different m}
The results of the previous subsections can be easily extended to the setting where each measurement operator has a different number of measurements, i.e. $A_1\in \R{m_1 \times n},\dots, A_{\ntransf}\in \R{m_{\ntransf} \times n}$. In this case, the necessary condition in \Cref{prop: necessary multA} is $\sum_{g=1}^{\ntransf} m_g \geq n$, and the sufficient condition in~\Cref{theo: multiple op} is $\frac{1}{\ntransf}\sum_{g=1}^{\ntransf} m_g > k + n/\ntransf$.
As the proofs mirror the ones of \Cref{prop: necessary multA} and  \Cref{theo: multiple op}, we leave the details to the reader.

\paragraph{Noisy measurement data}
Surprisingly, the results of this section are also theoretically valid if the measurements are corrupted by independent additive noise $\epsilon$, i.e., $y = A_{g}x + \epsilon$, as long as the noise distribution is \emph{known} and has a nowhere zero characteristic function (e.g., Gaussian noise):
\begin{proposition} \label{prop:noise}
For a fixed noise distribution, if its characteristic function is nowhere zero, then there is a one-to-one mapping between the space of clean measurement distributions and noise measurement distributions. 
\end{proposition}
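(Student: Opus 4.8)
The plan is to work entirely on the level of characteristic functions and to use the fact that the characteristic function of a sum of independent random variables factorizes. Fix the noise distribution with characteristic function $\psi(w) = \expectation{e^{\mathrm{i}w^{\top}\epsilon}}$, and suppose $\psi(w)\neq 0$ for every $w\in\R{n}$. Given a clean measurement distribution (for a fixed operator $A_g$, say, with characteristic function $\varphi_y(w) = \expectation{e^{\mathrm{i}w^{\top}A_gx}}$), the noisy measurement $y' = A_gx + \epsilon$ has characteristic function $\varphi_{y'}(w) = \varphi_y(w)\,\psi(w)$ by independence. I would first record this factorization identity, then observe that the map $\varphi_y \mapsto \varphi_y \cdot \psi$ on the space of characteristic functions is injective precisely because $\psi$ is nowhere zero: if two clean distributions yield the same noisy distribution, then $\varphi_{y,1}(w)\psi(w) = \varphi_{y,2}(w)\psi(w)$ for all $w$, and dividing by $\psi(w)\neq 0$ gives $\varphi_{y,1} \equiv \varphi_{y,2}$, hence the two clean distributions coincide (by uniqueness of the characteristic function). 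This establishes that the map clean $\mapsto$ noisy is one-to-one.

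For completeness I would also note the inverse direction: $\varphi_y(w) = \varphi_{y'}(w)/\psi(w)$ recovers the clean characteristic function pointwise from the noisy one, so the correspondence is a genuine bijection onto its image (the inverse being "deconvolution" in the Fourier domain). Strictly speaking, one should check that $\varphi_{y'}/\psi$ is indeed a valid characteristic function, but since it equals $\varphi_y$ by construction whenever the noisy distribution arises from a clean one, this is automatic on the relevant domain — the claim only asserts a one-to-one mapping between the two families, not that every function of the noisy form is attainable.

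The main subtlety — more a point requiring care than a genuine obstacle — is the nowhere-zero hypothesis on $\psi$: without it, division by $\psi$ is not globally defined and distinct clean distributions can become indistinguishable after adding noise (the classic example being noise whose characteristic function vanishes on a set, which erases the corresponding Fourier modes of the signal). Since Gaussian noise has $\psi(w) = e^{-\frac{1}{2}w^{\top}\Sigma w}$, which never vanishes, the hypothesis is satisfied in the case of practical interest. The only other thing worth spelling out is that this argument is pointwise in $w$ and hence does not interact with the measurement operators at all: it applies verbatim to each $A_g$ separately, so combined with \Cref{theo: multiple op} it shows that the sufficient condition $m > k + n/\ntransf$ for model identification carries over unchanged to the noisy setting, since identifying the clean measurement distributions is equivalent to identifying the noisy ones.
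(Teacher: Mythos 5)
Your proof is correct and follows essentially the same route as the paper: both rest on the factorization $\varphi_z = \varphi_y\varphi_\epsilon$ for independent sums, division by the nowhere-zero noise characteristic function, and uniqueness of distributions with equal characteristic functions, applied separately to each operator $A_g$. Your additional remarks on the necessity of the nowhere-zero hypothesis and on the inverse "deconvolution" direction are sensible but not needed beyond what the paper records.
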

The proof is included in~\Cref{sec:proofs}. If the clean measurement distribution can be uniquely identified from the noisy distribution, the results in \Cref{theo: multiple op} and \Cref{prop: necessary multA} also carry over to the noisy setting. Note that this only guarantees model identifiability and makes no claims on the sample complexity of any learning process.

\section{Algorithms}\label{sec: algos}
Unsupervised algorithms mainly come in two flavours: we can first learn a model $\hat{\signalset}$ to then reconstruct by projecting measurements into this set, or we can attempt to directly learn the reconstruction function parameterized by a deep network.
\subsection{Learn the Model and Reconstruct}
\paragraph{Dictionary and Subspace Learning}  If $\signalset$ is (approximately) a union of subspaces~\cite{gleichman2011blind} or a single subspace~\cite{candes2009exact}, we can learn a model by
\begin{equation}
    \argmin_{z,D} \mathbb{E}_{(y,g)}\| y- A_gDz \|^2 + \rho_1(D) + \rho_2(z)
\end{equation}
where $\rho_1(D)$ and $\rho_2(z)$ are regularisation terms that promote low-dimensional solutions, e.g., sparse codes $z$ if $\signalset$ is a union of subspaces. At test time, the dictionary is fixed and the optimization is performed over the codes only. 

\paragraph{AmbientGAN} Complex datasets are often better modelled by a generative network $f:\R{k}\mapsto\R{n}$ whose input is a low-dimensional latent code $z\in\R{k}$.  The generative model can be learned using an adversarial strategy, i.e.,
\begin{equation} \label{eq:conditionalGAN}
    \argmin_f \max_d \mathbb{E}_{(y,g)} q\{ d(A_{g}^{\dagger}y) \} +  \mathbb{E}_{z}\mathbb{E}_{g} q\left\{ 1- d\left(A_{g}^{\dagger}A_{g}f(z)\right) \right\}
\end{equation}
where $d:\R{n}\mapsto\R{n}$ is the discriminator network which compares measurements in the image domain, $z$ is usually sampled from a Gaussian distribution, and $q(t)=\log(t)$ for standard GANs and $q(t)=t$ for Wasserstein GANs.
At test time, the reconstruction can be obtained by finding the latent code that best fits the measurements
$\hat{x} = f(\argmin_z \|y - A_gf(z) \|^2)$, as in~\cite{bora2017compressed}.

\subsection{Learn to Reconstruct}
Another approach consists in learning directly the reconstruction function $f: \R{m}\times \R{m\times n}\mapsto \R{n}$ whose inputs are the measurement $y$ and the associated operator $A_g$, and the output is the reconstructed signal $x$. The reconstruction function can have either a denoiser form, $f(y,A_g) = \tilde{f}(A_g^{\dagger}y)$ where $\tilde{f}$ is  independent of $A_g$~\cite{jin2017deep}, or a more complex unrolled structure with many denoising and gradient steps~\cite{monga2021unrolled}.
\paragraph{Measurement Splitting}
Inspired by the Noise2Noise approaches~\cite{lehtinen2018noise2noise,xia2019training}, some self-supervised methods~\cite{yaman2020mri,liu2020rare} split each measurement into two parts, $y^{\top} = [y_{1}^{\top},y_{2}^{\top}]$, such that the input is $y_{1}$ and the target is $y_{2}$. These methods can be summarised as minimising the following loss 
\begin{equation} \label{eq:a2a}
    \argmin_f \mathbb{E}_{(y,g)}\| y_{2} - A_{g,2} f(y_{1}, A_{g,1}) \|^2 
\end{equation}
 where $y_{1}=A_{g,1}x+\epsilon_1$ and $y_{2}=A_{g,2}x+\epsilon_2$.
 This approach suffers from the fact that $f$ does not use all the available information in a given measurement, as it attempts to solve a harder reconstruction problem associated with $y_{1}=A_{g,1}x+\epsilon_1$. As reconstruction networks often fail to generalise to operators with more measurements~\cite{antun2020instabilities}, this method can suffer from suboptimal reconstructions at test time.

\paragraph{Proposed Method}
\rev{The analysis in~\cref{sec: low-dim models} shows that model identifiability necessarily requires that reconstructed signals are consistent with all operators $A_1,\dots,A_{\ntransf}$.} Thus, we propose an unsupervised loss that ensures consistency across all projections $A_g$, that is
\begin{equation} \label{eq:moi}
    \argmin_f \mathbb{E}_{(y,g)} \left\{\| y - A_{g} f(y, A_{g}) \|^2 +  \mathbb{E}_{s} \| \hat{x} - f(A_{s}\hat{x}, A_{s}) \|^2 \right\}
\end{equation}
where $\hat{x} = f(y, A_{g})$. The first term ensures measurement consistency $y = A_{g}f(y,A_{g})$, whereas the second term enforces consistency across operators, i.e., $ f(y,A_{g}) = f(A_s f(y,A_{g}),A_s)$ for all $g\neq s$. Crucially, the second term prevents the network from learning the trivial pseudo-inverse $f(y,A_g) = A_g^{\dagger}y$. In practice, we choose an operator $A_s$ uniformly at random per minibatch. Thus, compared the supervised case, we only require an additional evaluation of $f$ and $A_s$ per minibatch.  We coin this approach multi-operator imaging (MOI). 
This loss overcomes the main disadvantages of previous approaches: it doesn't require training a discriminator network, and it learns to reconstruct using all the available information in each measurement $y$. 


\section{Experiments} 

In this section, we present a series of experiments where the goal is to learn the reconstruction function with deep networks using real datasets. Experiments on low-dimensional subspace learning using synthetic datasets are presented in~\Cref{subsec: subspace}. All our experiments were performed using an internal cluster of 4 NVIDIA RTX 3090 GPUs with a total compute time of approximately 48 hs.

\paragraph{Compressed Sensing and Inpainting with MNIST}
We evaluate the theoretical bounds on the MNIST dataset, based on the well known approximation of its box-counting dimension $k\approx 12$~\cite{hein2005intrinsic}. The dataset contains $N=60000$ training samples, and these are partitioned such that $N/G$ different samples are observed via each operator. The forward operators are compressed sensing (CS) matrices with entries sampled from a Gaussian distribution with zero mean and variance $m^{-1}$. The test set consists of $10000$ samples, which are also randomly divided into $G$ parts, one per operator. 
To evaluate the theoretical bounds, we attempt to minimize the impact of the inductive bias of the networks' architecture~\cite{ulyanov2018deep,tachella2021nonlocal} by using a network with 5 fully connected layers and relu non-linearities.



\Cref{fig:results moi mnist cs} shows the average test peak-signal-to-noise ratio (PSNR) achieved by the model trained using the proposed MOI loss for $G=1,10,20,30,40$ and $m=50,100,200,300,400$. The results follow the bound presented in~\Cref{sec: low-dim models} (indicated by the red dashed line), as the network is  only able to  learn the reconstruction mapping when the sufficient condition $m>k+n/G$ is verified. In sensing regimes below this condition, the performance is similar to the pseudo-inverse $A_g^{\dagger}$. 

We also evaluate the reconstruction for a different number $G$ of random inpainting masks and different rates $m$. The inpainting operators have a diagonal structure which has zero measure in $\mathbb{R}^{m\times n}$, however our sufficient condition still provides a reasonable lower bound on predicting the performance, as shown in~\Cref{fig:results moi mnist inp}. It is likely that due to the coherence between measurement operators and images (both operators and MNIST images are sparse), more measurements are required to obtain good reconstructions than in the CS case. 

 \begin{figure}[h]
\centering
 \begin{subfigure}{0.35\textwidth}
     \centering
     \includegraphics[width=1\textwidth]{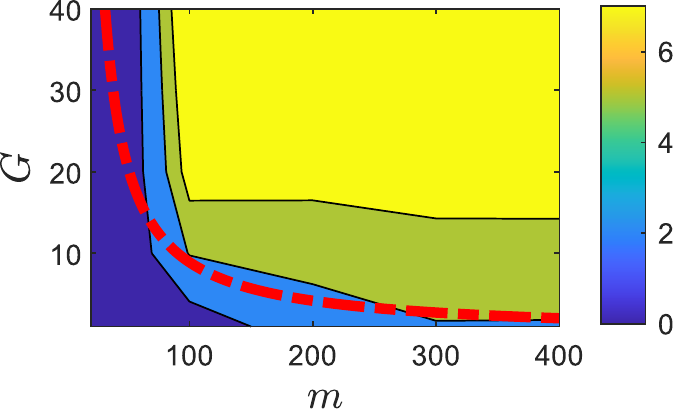}
     \caption{Compressed Sensing}
     \label{fig:results moi mnist cs}
 \end{subfigure}
 \begin{subfigure}{0.35\textwidth}
     \centering
\includegraphics[width=1\columnwidth]{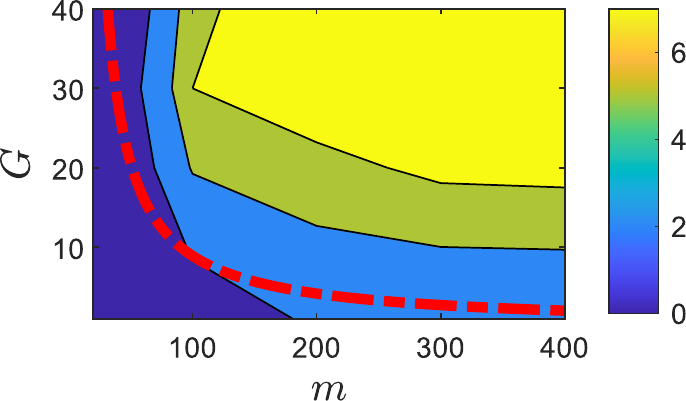}
     \caption{Inpainting}
     \label{fig:results moi mnist inp}
 \end{subfigure}
\caption{Average test PSNR improvement in dB over the pseudo-inverse for the MNIST dataset using the proposed training loss, for different number of CS operators  or inpainting masks and measurements per operator. The curve in red shows the necessary condition of \Cref{theo: multiple op}, $m>k+n/\ntransf$.}
\label{fig:mnist_results}
\end{figure}

\begin{table}[t]
\begin{center}
\fontsize{8}{12}\selectfont
\begin{tabular}{c|cccc}
 Inpainting/CelebA & $A^{\dagger}y$ & AmbientGAN & MOI (ours)  & Supervised\\ \hline
 & 9.05$\pm$1.65 & 29.57$\pm$1.24 & \textbf{34.05$\pm$3.77}&  36.21$\pm$3.76 \\
\end{tabular}

\begin{tabular}{c|cccc}
Acc. MRI/FastMRI   & $A^{\dagger}y$ & Meas. Splitting & MOI (ours)  & Supervised\\ \hline
\JT{Denoiser $f$} & 25.77$\pm$2.71 & 28.72$\pm$1.64 &  \textbf{29.51$\pm$1.85} & 31.45$\pm$1.98 \\
\JT{Unrolled $f$}  & 25.77$\pm$2.71 & 29.47$\pm$2.02 &  \textbf{31.39$\pm$2.17} & 32.42$\pm$2.44 \\
\end{tabular}

\end{center}
\caption{Comparison of supervised and unsupervised learning methods for inpainting and accelerated MRI. Reported values correspond to average PSNR in dB on the testing set.}\label{tab:all_results}

\end{table}

\paragraph{Inpainting with CelebA}
We evaluate the unsupervised methods in~\Cref{sec: algos} on the CelebA dataset~\cite{liu2015faceattributes}, which is split into 32556 images for training and 32556 images for testing.  We use the same U-Net (see~\Cref{sec: training details}) for  MOI and supervised learning, and the DCGAN architecture~\cite{radford2015unsupervised} for AmbientGAN as in~\cite{bora2018ambientgan}. \rev{We observed that using the U-Net for AmbientGAN's generator achieves worse results than the DCGAN architecture.} Reconstructed test images are shown in~\Cref{fig:inpainting_images} and average test PSNR is presented in~\cref{tab:all_results}.
The proposed method obtains an improvement of more than 4 dB with respect to  AmbientGAN and falls only $2.1$ dB behind supervised learning.

\paragraph{Accelerated MRI with FastMRI}
Finally, we consider the FastMRI dataset~\cite{knoll2020fastMRI}, where the set of forward operators $A_g$ consist of different sets of single-coil $k$-space measurements, with $4\times$ acceleration, i.e., $m/n= 0.25$. We used $900$ images for training and 74 for testing, which we split across $G=40$ operators. We compare measurement splitting, MOI and supervised learning, all using the same \JT{denoiser or unrolled architecture (see Appendix B for details). For measurement splitting,} we follow the strategy in~\cite{yaman2020mri}, and choose to assign a random subset representing $60\%$ of the measurements in $A_g$ to $A_{g,1}$ and  the remaining to $A_{g,2}$. We observed that a model trained with measurement splitting obtained less test error using reduced measurements associated with $A_{g,1}$ instead of the full measurements $A_{g}$, so we report the best\footnote{\JT{Using the full measurements with the denoiser architecture obtains an average test PSNR of 27.3 dB, i.e., a decrease of 1.4 dB  with respect to the performance using $A_{g,1}$ presented in~\Cref{tab:all_results}.}} results using $A_{g,1}$. As observed in~\cite{antun2020instabilities}, the network  fails to generalize to operators with more measurements. 
 Average test PSNR is presented in~\Cref{tab:all_results}. Reconstructed test images with the unrolled architecture are shown in~\Cref{fig:mri_images_unrolled}. 
The proposed method \JT{performs better than measurement splitting while obtaining results close to the} supervised setting. \JT{All training approaches obtain a better performance \JT{with the unrolled architecture} than using the denoising network due to the  architectural improvements.}

 \begin{figure}[h]
\centering\centering
\begin{minipage}{.195\linewidth}
\centerline{\includegraphics[width=1\textwidth]{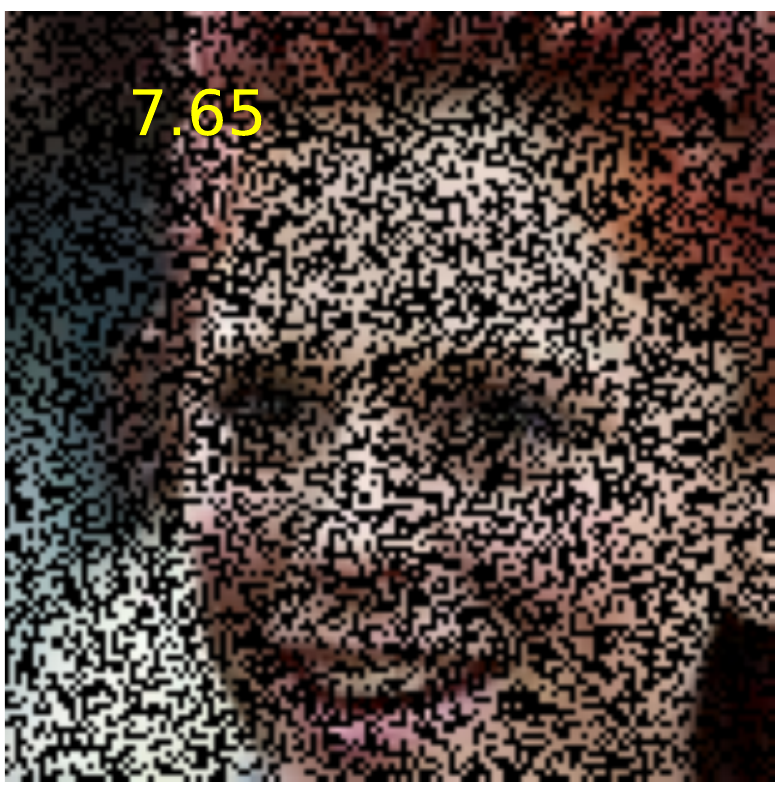}}
\end{minipage}
\begin{minipage}{.195\linewidth}
\centerline{\includegraphics[width=1\textwidth]{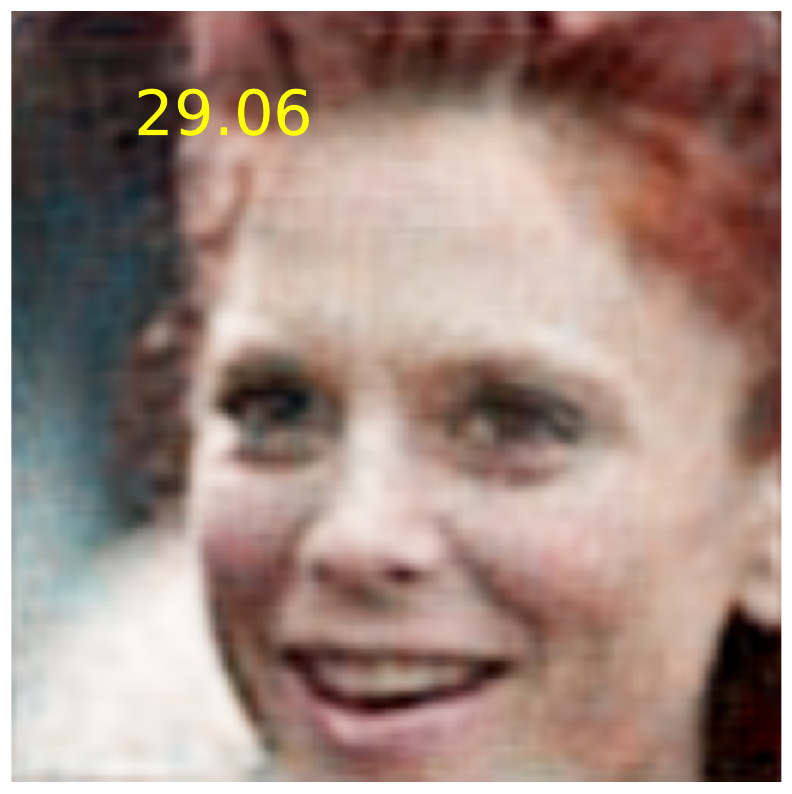}}
\end{minipage}
\begin{minipage}{.195\linewidth}
\centerline{\includegraphics[width=1\textwidth]{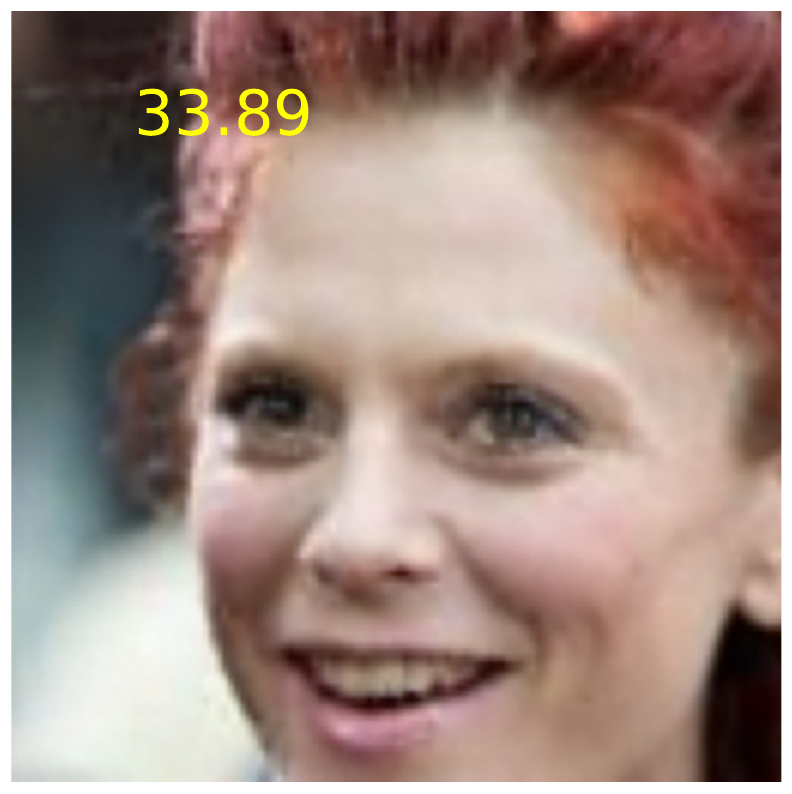}}
\end{minipage}
\begin{minipage}{.195\linewidth}
\centerline{\includegraphics[width=1\textwidth]{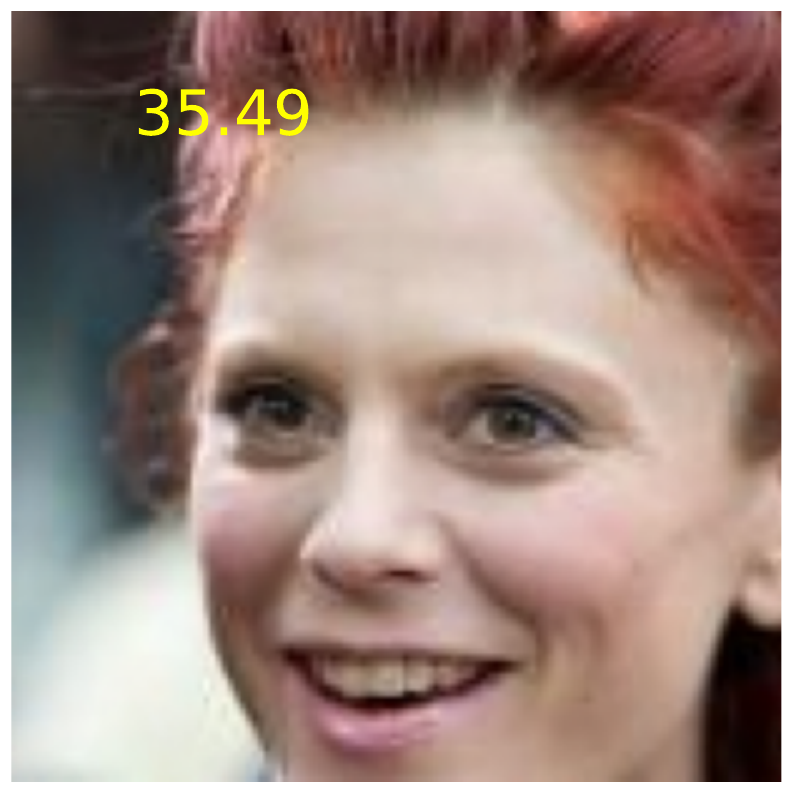}}
\end{minipage}
\begin{minipage}{.195\linewidth}
\centerline{\includegraphics[width=1\textwidth]{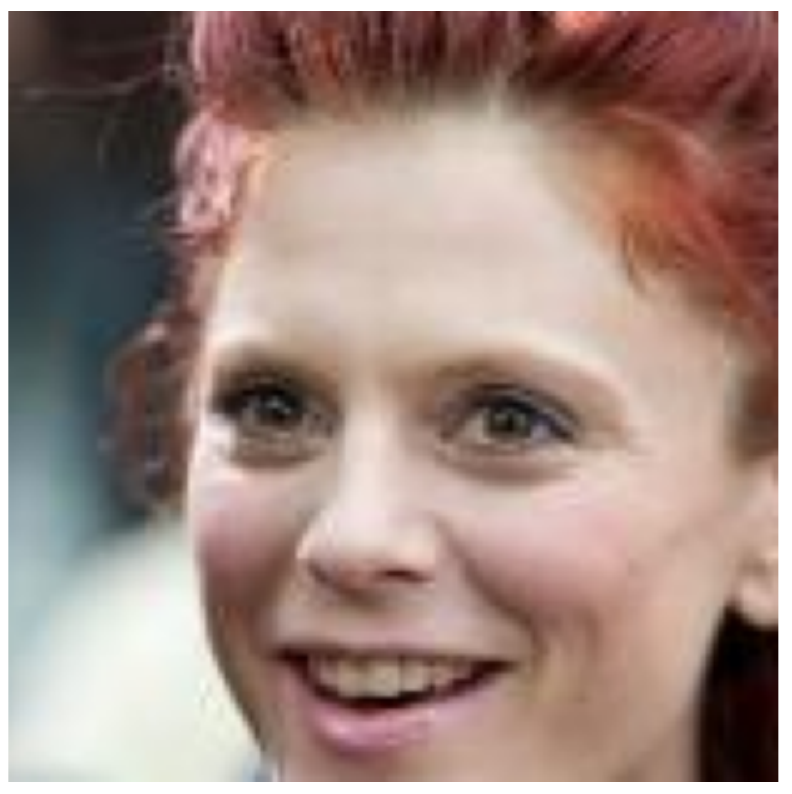}}
\end{minipage}
\caption{Reconstructed test images for the inpainting task using the CelebA dataset. From left to right: pseudo-inverse $A^{\dagger}y$, AmbientGAN, MOI, supervised and ground-truth. PSNR values are reported in yellow. }
\label{fig:inpainting_images}
\end{figure}

 \begin{figure}[h]
\centering\centering
\begin{minipage}{.195\linewidth}
\centerline{\includegraphics[width=1\textwidth]{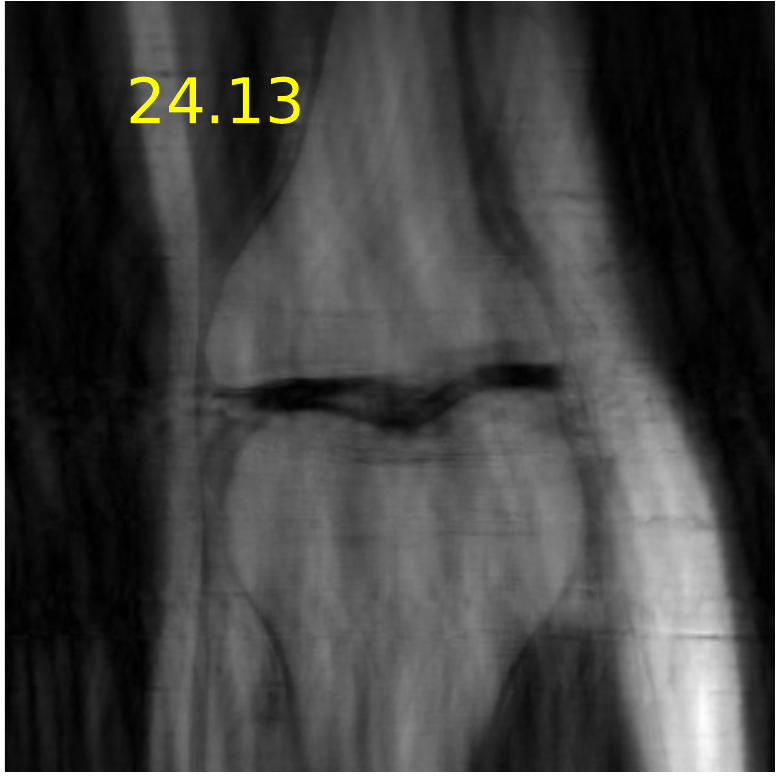}}
\end{minipage}
\begin{minipage}{.195\linewidth}
\centerline{\includegraphics[width=1\textwidth]{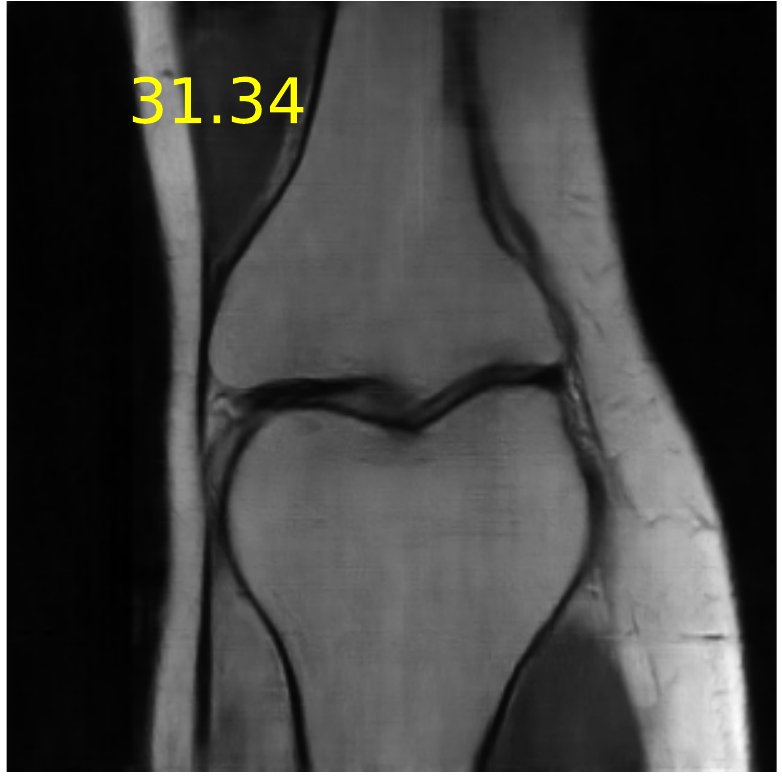}}
\end{minipage}
\begin{minipage}{.195\linewidth}
\centerline{\includegraphics[width=1\textwidth]{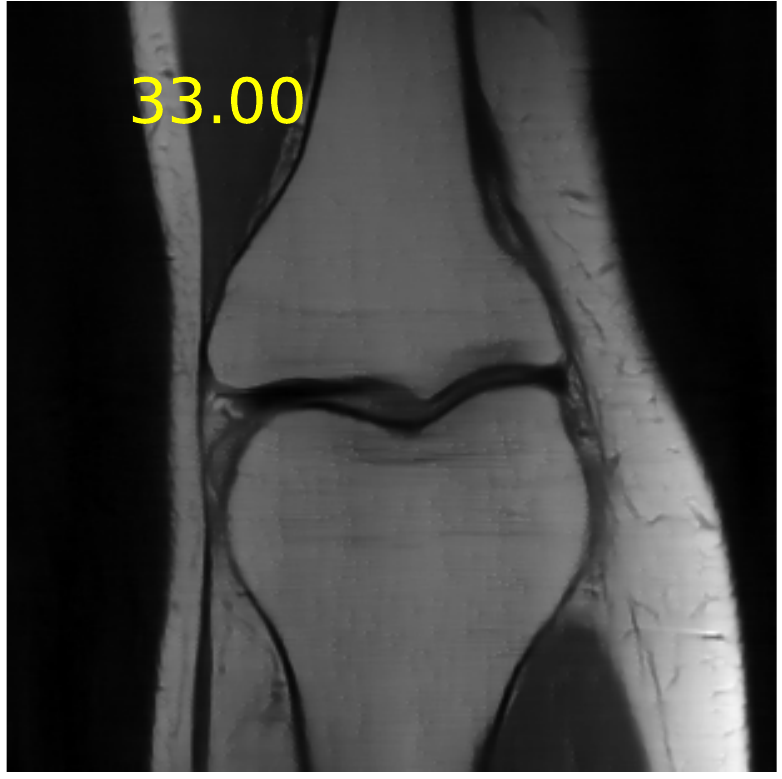}}
\end{minipage}
\begin{minipage}{.195\linewidth}
\centerline{\includegraphics[width=1\textwidth]{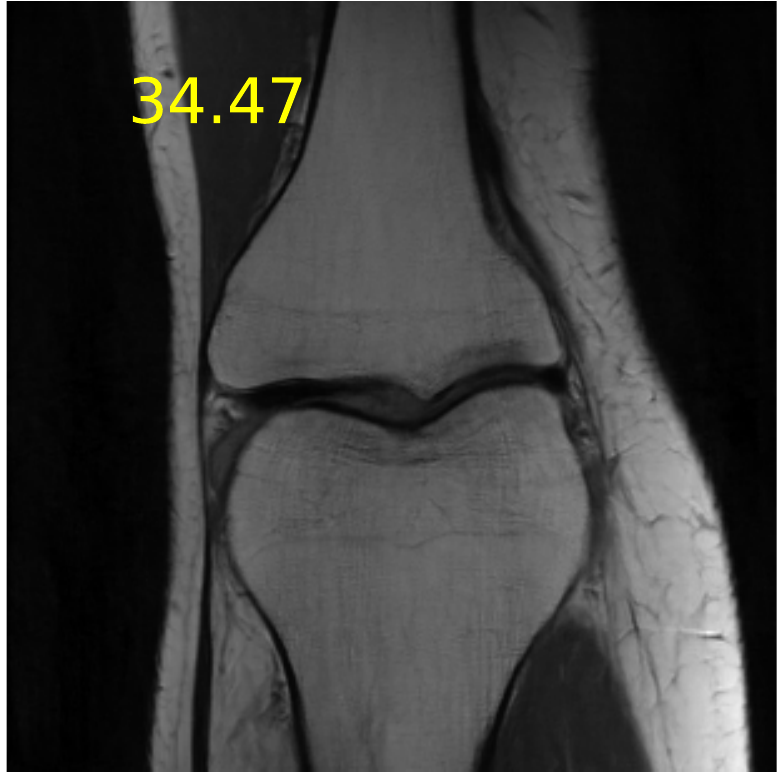}}
\end{minipage}
\begin{minipage}{.195\linewidth}
\centerline{\includegraphics[width=1\textwidth]{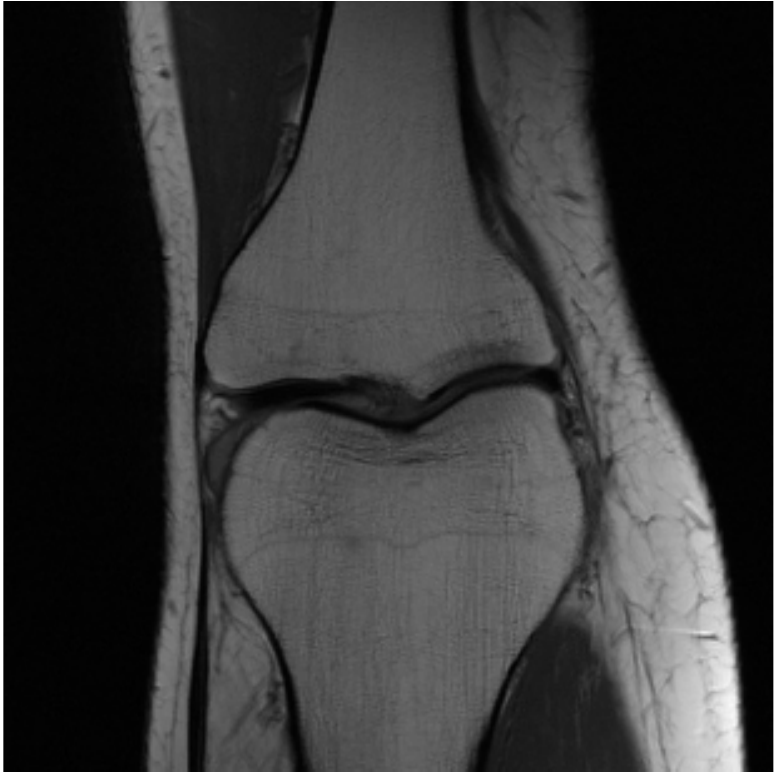}}
\end{minipage}
\caption{Examples of reconstructed test images for the accelerated MRI task using an unrolled network architecture (PGD-3). From left to right: pseudo-inverse $A^{\dagger}y$, measurement splitting, MOI, supervised and ground-truth. PSNR values are reported in yellow. }
\label{fig:mri_images_unrolled}
\end{figure}

\section{Limitations}
\Cref{theo: multiple op} does not cover cases where the operators $A_g$ present some problem specific constraints (e.g., they are inpainting matrices) as well as cases where the signal model is only approximately low dimensional. 
Note however that~\cref{prop: necessary multA} applies to constrained operators. We leave the study of sufficient conditions for these particular cases for future work. 
The proposed loss might not be effective in problems where learning the reconstruction function is impossible, e.g., due to very high noise affecting the measurements~\cite{gupta2020multi}. In this particular case,  it might be possible to learn a generative model as in AmbientGAN~\cite{bora2018ambientgan}.

\section{Conclusions} 
We presented sensing theorems for the unsupervised learning of signal models from incomplete measurements using multiple measurement operators. Our bounds characterize the interplay between the  fundamental properties of the problem: the ambient dimension, the data dimension and the number of measurement operators. The bounds are agnostic of the learning algorithms and provide useful necessary and sufficient conditions for designing principled sensing strategies.

Furthermore, we presented a new practical unsupervised learning loss which learns to reconstruct incomplete measurement data from multiple operators, outperforming previously proposed unsupervised methods. The proposed strategy avoids the adversarial training in AmbientGAN, which can suffer from mode collapse~\cite{arora2018do}, and, contrary to measurement splitting, is trained using full operators $A_g$.
Our results shed light into the setting where access to ground truth data cannot be guaranteed which is of extreme importance in various applications.

\section*{Acknowledgements}
\JT{This work is supported by the ERC C-SENSE project (ERCADG-2015-694888).}

\bibliography{biblio}

\begin{thebibliography}{10}

\bibitem{jin2017deep}
Kyong~Hwan Jin, Michael~T McCann, Emmanuel Froustey, and Michael Unser.
\newblock Deep convolutional neural network for inverse problems in imaging.
\newblock {\em IEEE Transactions on Image Processing}, 26(9):4509--4522, 2017.

\bibitem{rapp2020advances}
Joshua Rapp, Julian Tachella, Yoann Altmann, Stephen McLaughlin, and Vivek~K
  Goyal.
\newblock Advances in single-photon lidar for autonomous vehicles: Working
  principles, challenges, and recent advances.
\newblock {\em IEEE Signal Processing Magazine}, 37(4):62--71, 2020.

\bibitem{o2018confocal}
Matthew O’Toole, David~B Lindell, and Gordon Wetzstein.
\newblock Confocal non-line-of-sight imaging based on the light-cone transform.
\newblock {\em Nature}, 555(7696):338--341, 2018.

\bibitem{gupta2020multi}
Harshit Gupta, Thong~H Phan, Jaejun Yoo, and Michael Unser.
\newblock Multi-{CryoGAN}: Reconstruction of continuous conformations in
  cryo-em using generative adversarial networks.
\newblock In {\em European Conference on Computer Vision}, pages 429--444.
  Springer, 2020.

\bibitem{studer2012dictionary}
Christoph Studer and Richard~G Baraniuk.
\newblock Dictionary learning from sparsely corrupted or compressed signals.
\newblock In {\em 2012 IEEE International Conference on Acoustics, Speech and
  Signal Processing (ICASSP)}, pages 3341--3344. IEEE, 2012.

\bibitem{yang2015mixture}
Jianbo Yang, Xuejun Liao, Xin Yuan, Patrick Llull, David~J. Brady, Guillermo
  Sapiro, and Lawrence Carin.
\newblock Compressive sensing by learning a gaussian mixture model from
  measurements.
\newblock {\em IEEE Transactions on Image Processing}, 24(1):106--119, 2015.

\bibitem{bora2018ambientgan}
Ashish Bora, Eric Price, and Alexandros~G. Dimakis.
\newblock Ambient{GAN}: Generative models from lossy measurements.
\newblock In {\em International Conference on Learning Representations}, 2018.

\bibitem{candes2009exact}
Emmanuel~J Cand{\`e}s and Benjamin Recht.
\newblock Exact matrix completion via convex optimization.
\newblock {\em Foundations of Computational mathematics}, 9(6):717, 2009.

\bibitem{lehtinen2018noise2noise}
Jaakko Lehtinen, Jacob Munkberg, Jon Hasselgren, Samuli Laine, Tero Karras,
  Miika Aittala, Timo Aila, et~al.
\newblock Noise2noise.
\newblock In {\em International Conference on Machine Learning}. PMLR, 2018.

\bibitem{gleichman2011blind}
Sivan Gleichman and Yonina~C Eldar.
\newblock Blind compressed sensing.
\newblock {\em IEEE Transactions on Information Theory}, 57(10):6958--6975,
  2011.

\bibitem{chen2015rankone}
Y.~{Chen}, Y.~{Chi}, and A.~J. {Goldsmith}.
\newblock Exact and stable covariance estimation from quadratic sampling via
  convex programming.
\newblock {\em IEEE Transactions on Information Theory}, 61(7):4034--4059,
  2015.

\bibitem{bourrier2014fundamental}
Anthony Bourrier, Mike~E Davies, Tomer Peleg, Patrick P{\'e}rez, and R{\'e}mi
  Gribonval.
\newblock Fundamental performance limits for ideal decoders in high-dimensional
  linear inverse problems.
\newblock {\em IEEE Transactions on Information Theory}, 60(12):7928--7946,
  2014.

\bibitem{silva2011blind}
Jorge Silva, Minhua Chen, Yonina~C Eldar, Guillermo Sapiro, and Lawrence Carin.
\newblock Blind compressed sensing over a structured union of subspaces.
\newblock {\em arXiv preprint arXiv:1103.2469}, 2011.

\bibitem{aghagolzadeh2015new}
Mohammad Aghagolzadeh and Hayder Radha.
\newblock New guarantees for blind compressed sensing.
\newblock In {\em 2015 53rd Annual Allerton Conference on Communication,
  Control, and Computing (Allerton)}, pages 1227--1234. IEEE, 2015.

\bibitem{pimentel2016characterization}
Daniel~L Pimentel-Alarc{\'o}n, Nigel Boston, and Robert~D Nowak.
\newblock A characterization of deterministic sampling patterns for low-rank
  matrix completion.
\newblock {\em IEEE Journal of Selected Topics in Signal Processing},
  10(4):623--636, 2016.

\bibitem{pimentel2016information}
Daniel Pimentel-Alarcon and Robert Nowak.
\newblock The information-theoretic requirements of subspace clustering with
  missing data.
\newblock In {\em International Conference on Machine Learning}, pages
  802--810. PMLR, 2016.

\bibitem{eriksson2012high}
Brian Eriksson, Laura Balzano, and Robert Nowak.
\newblock High-rank matrix completion.
\newblock In {\em Artificial Intelligence and Statistics}, pages 373--381.
  PMLR, 2012.

\bibitem{xia2019training}
Zhihao Xia and Ayan Chakrabarti.
\newblock Training image estimators without image ground truth.
\newblock In {\em Advances in Neural Information Processing Systems},
  volume~32, 2019.

\bibitem{yaman2020mri}
Burhaneddin Yaman, Seyed Amir~Hossein Hosseini, Steen Moeller, Jutta Ellermann,
  Kâmil Uğurbil, and Mehmet Akçakaya.
\newblock Self-supervised learning of physics-guided reconstruction neural
  networks without fully sampled reference data.
\newblock {\em Magnetic Resonance in Medicine}, 84(6):3172--3191, 2020.

\bibitem{liu2020rare}
Jiaming Liu, Yu~Sun, Cihat Eldeniz, Weijie Gan, Hongyu An, and Ulugbek~S
  Kamilov.
\newblock Rare: Image reconstruction using deep priors learned without
  groundtruth.
\newblock {\em IEEE Journal of Selected Topics in Signal Processing},
  14(6):1088--1099, 2020.

\bibitem{metzler2018unsupervised}
Christopher~A Metzler, Ali Mousavi, Reinhard Heckel, and Richard~G Baraniuk.
\newblock Unsupervised learning with stein's unbiased risk estimator.
\newblock {\em arXiv preprint arXiv:1805.10531}, 2018.

\bibitem{zhussip2019training}
Magauiya Zhussip, Shakarim Soltanayev, and Se~Young Chun.
\newblock Training deep learning based image denoisers from undersampled
  measurements without ground truth and without image prior.
\newblock In {\em Proceedings of the IEEE/CVF Conference on Computer Vision and
  Pattern Recognition}, pages 10255--10264, 2019.

\bibitem{chen2021equivariant}
Dongdong Chen, Juli{\'a}n Tachella, and Mike~E Davies.
\newblock Equivariant imaging: Learning beyond the range space.
\newblock In {\em Proceedings of the IEEE/CVF International Conference on
  Computer Vision (ICCV)}, pages 4379--4388, October 2021.

\bibitem{chen2022robust}
Dongdong Chen, Juli{\'a}n Tachella, and Mike~E Davies.
\newblock Robust equivariant imaging: a fully unsupervised framework for
  learning to image from noisy and partial measurements.
\newblock In {\em Proceedings of the IEEE/CVF Conference on Computer Vision and
  Pattern Recognition}, pages 5647--5656, 2022.

\bibitem{sauer1991embedology}
Tim Sauer, James~A Yorke, and Martin Casdagli.
\newblock Embedology.
\newblock {\em Journal of statistical Physics}, 65(3):579--616, 1991.

\bibitem{blumensath2009uos}
T.~{Blumensath} and M.~E. {Davies}.
\newblock Sampling theorems for signals from the union of finite-dimensional
  linear subspaces.
\newblock {\em IEEE Transactions on Information Theory}, 55(4):1872--1882,
  2009.

\bibitem{bristow2013fast}
Hilton Bristow, Anders Eriksson, and Simon Lucey.
\newblock Fast convolutional sparse coding.
\newblock In {\em Proceedings of the IEEE Conference on Computer Vision and
  Pattern Recognition}, pages 391--398, 2013.

\bibitem{rudin1992nonlinear}
Leonid~I Rudin, Stanley Osher, and Emad Fatemi.
\newblock Nonlinear total variation based noise removal algorithms.
\newblock {\em Physica D: nonlinear phenomena}, 60(1-4):259--268, 1992.

\bibitem{cramer1936some}
Harald Cram{\'e}r and Herman Wold.
\newblock Some theorems on distribution functions.
\newblock {\em Journal of the London Mathematical Society}, 1(4):290--294,
  1936.

\bibitem{bora2017compressed}
Ashish Bora, Ajil Jalal, Eric Price, and Alexandros~G Dimakis.
\newblock Compressed sensing using generative models.
\newblock In {\em International Conference on Machine Learning}, pages
  537--546. PMLR, 2017.

\bibitem{monga2021unrolled}
Vishal Monga, Yuelong Li, and Yonina~C. Eldar.
\newblock Algorithm unrolling: Interpretable, efficient deep learning for
  signal and image processing.
\newblock {\em IEEE Signal Processing Magazine}, 38(2):18--44, 2021.

\bibitem{antun2020instabilities}
Vegard Antun, Francesco Renna, Clarice Poon, Ben Adcock, and Anders~C. Hansen.
\newblock On instabilities of deep learning in image reconstruction and the
  potential costs of ai.
\newblock {\em Proceedings of the National Academy of Sciences},
  117(48):30088--30095, 2020.

\bibitem{hein2005intrinsic}
Matthias Hein and Jean-Yves Audibert.
\newblock Intrinsic dimensionality estimation of submanifolds in rd.
\newblock In {\em Proceedings of the 22nd international conference on Machine
  learning}, pages 289--296, 2005.

\bibitem{ulyanov2018deep}
Dmitry Ulyanov, Andrea Vedaldi, and Victor Lempitsky.
\newblock Deep image prior.
\newblock In {\em Proceedings of the IEEE Conference on Computer Vision and
  Pattern Recognition}, pages 9446--9454, 2018.

\bibitem{tachella2021nonlocal}
Julian Tachella, Junqi Tang, and Mike Davies.
\newblock The neural tangent link between {CNN} denoisers and non-local
  filters.
\newblock In {\em Proceedings of the IEEE/CVF Conference on Computer Vision and
  Pattern Recognition (CVPR)}, pages 8618--8627, June 2021.

\bibitem{liu2015faceattributes}
Ziwei Liu, Ping Luo, Xiaogang Wang, and Xiaoou Tang.
\newblock Deep learning face attributes in the wild.
\newblock In {\em Proceedings of International Conference on Computer Vision
  (ICCV)}, December 2015.

\bibitem{radford2015unsupervised}
Alec Radford, Luke Metz, and Soumith Chintala.
\newblock Unsupervised representation learning with deep convolutional
  generative adversarial networks.
\newblock {\em arXiv preprint arXiv:1511.06434}, 2015.

\bibitem{knoll2020fastMRI}
Florian Knoll, Jure Zbontar, Anuroop Sriram, Matthew~J. Muckley, Mary Bruno,
  Aaron Defazio, Marc Parente, Krzysztof~J. Geras, Joe Katsnelson, Hersh
  Chandarana, Zizhao Zhang, Michal Drozdzalv, Adriana Romero, Michael Rabbat,
  Pascal Vincent, James Pinkerton, Duo Wang, Nafissa Yakubova, Erich Owens,
  C.~Lawrence Zitnick, Michael~P. Recht, Daniel~K. Sodickson, and Yvonne~W.
  Lui.
\newblock {fastMRI}: A publicly available raw k-space and {DICOM} dataset of
  knee images for accelerated mr image reconstruction using machine learning.
\newblock {\em Radiology: Artificial Intelligence}, 2(1):e190007, 2020.
\newblock PMID: 32076662.

\bibitem{arora2018do}
Sanjeev Arora, Andrej Risteski, and Yi~Zhang.
\newblock Do {GAN}s learn the distribution? some theory and empirics.
\newblock In {\em International Conference on Learning Representations}, 2018.

\bibitem{cai2010singular}
Jian-Feng Cai, Emmanuel~J Cand{\`e}s, and Zuowei Shen.
\newblock A singular value thresholding algorithm for matrix completion.
\newblock {\em SIAM Journal on optimization}, 20(4):1956--1982, 2010.

\end{thebibliography}
\bibliographystyle{unsrt}

\section*{Checklist}


\begin{enumerate}

\item For all authors...
\begin{enumerate}
  \item Do the main claims made in the abstract and introduction accurately reflect the paper's contributions and scope?
    \answerYes{}
  \item Did you describe the limitations of your work?
    \answerYes{}
  \item Did you discuss any potential negative societal impacts of your work?
    \answerYes{}
  \item Have you read the ethics review guidelines and ensured that your paper conforms to them?
    \answerYes{}
\end{enumerate}

\item If you are including theoretical results...
\begin{enumerate}
  \item Did you state the full set of assumptions of all theoretical results?
    \answerYes{}
        \item Did you include complete proofs of all theoretical results?
    \answerYes{}
\end{enumerate}

\item If you ran experiments...
\begin{enumerate}
  \item Did you include the code, data, and instructions needed to reproduce the main experimental results (either in the supplemental material or as a URL)?
    \answerYes{}
  \item Did you specify all the training details (e.g., data splits, hyperparameters, how they were chosen)?
    \answerYes{}
        \item Did you report error bars (e.g., with respect to the random seed after running experiments multiple times)?
    \answerYes{}
        \item Did you include the total amount of compute and the type of resources used (e.g., type of GPUs, internal cluster, or cloud provider)?
    \answerYes{}
\end{enumerate}

\item If you are using existing assets (e.g., code, data, models) or curating/releasing new assets...
\begin{enumerate}
  \item If your work uses existing assets, did you cite the creators?
    \answerYes{}
  \item Did you mention the license of the assets?
    \answerYes{}
  \item Did you include any new assets either in the supplemental material or as a URL?
    \answerNA{} We used existing models and datasets.
  \item Did you discuss whether and how consent was obtained from people whose data you're using/curating?
    \answerYes{} All datasets considered in the paper are publicly available.
  \item Did you discuss whether the data you are using/curating contains personally identifiable information or offensive content?
    \answerYes{}
\end{enumerate}

\item If you used crowdsourcing or conducted research with human subjects...
\begin{enumerate}
  \item Did you include the full text of instructions given to participants and screenshots, if applicable?
    \answerNA{} We did not use any crowdsourcing or conducted research with human subjects.
  \item Did you describe any potential participant risks, with links to Institutional Review Board (IRB) approvals, if applicable?
    \answerNA{}
  \item Did you include the estimated hourly wage paid to participants and the total amount spent on participant compensation?
    \answerNA{}
\end{enumerate}

\end{enumerate}

\newpage
\appendix

\section{Proofs}\label{sec:proofs} 

The proof of~\Cref{theo: multiple op} utilises the following technical lemma:
\begin{lemma} [Lemmas 4.5 and 4.6 in \cite{sauer1991embedology}] \label{lemma:sauer}

Let $S$ be a bounded subset of $\R{n}$, and let $G_0, G_1,\dots,G_t$ be Lipschitz maps from $S$ to $\R{m}$. For each integer $r\geq0$, let $S_r$ be the subset of $z\in S$ such that the rank of the $m\times t$ matrix 
\begin{equation}
    \Phi_z = [G_1(z),\dots,G_t(z)]
\end{equation}
is $r$, and let $\text{boxdim}(S_r) = k_r$. For each $\alpha\in\R{t}$ define $G_{\alpha}(z)=G_0 + \Phi_z\alpha $. 
If for all integers $r\geq 0$ we have that $r > k_r$, then $G_{\alpha}^{-1}(0)$ is empty for almost every $\alpha\in \R{t}$.
\end{lemma}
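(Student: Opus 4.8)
The plan is to show that the \emph{bad set} $B=\{\alpha\in\R{t} : G_\alpha^{-1}(0)\neq\emptyset\}$ has Lebesgue measure zero, via a stratification-and-covering argument. First I would partition $S$ by the rank of $\Phi_z$, writing $S=\bigcup_{r\geq0}S_r$. Since $\Phi_z$ is $m\times t$, the rank ranges over the finite set $\{0,1,\dots,\min(m,t)\}$, so it suffices to prove that for each fixed $r$ the set $B_r=\{\alpha : \exists\,z\in S_r,\ G_\alpha(z)=0\}$ is null; a union of finitely many null sets is null. The guiding intuition is that for fixed $z\in S_r$ the solution set $\{\alpha : G_0(z)+\Phi_z\alpha=0\}$ is either empty or an affine subspace of dimension $t-r$, so the union over $z\in S_r$ of these fibers should have dimension at most $k_r+(t-r)<t$ by the hypothesis $r>k_r$; the real work is converting this dimension count into a rigorous measure bound. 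The stratum $r=0$ needs no argument, since $r>k_r$ with $r=0$ forces $\text{boxdim}(S_0)<0$, hence $S_0=\emptyset$.

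To localize, I would intersect with a ball $\|\alpha\|\leq R$ and take a countable union over $R\in\mathbb{N}$ at the very end. The main obstacle is that the affine fiber $\{\alpha : \Phi_z\alpha=-G_0(z)\}$ depends on $z$ in a non-Lipschitz way: the smallest nonzero singular value $\sigma_r(\Phi_z)$ can be arbitrarily small inside $S_r$, so the fiber can swing wildly and the naive Lipschitz transport of covers breaks down. To defeat this I would further stratify $S_r$ by a dyadic conditioning level, $S_r^{(s)}=\{z\in S_r : \sigma_r(\Phi_z)\in[2^{-s},2^{-s+1})\}$. This is a countable partition, and since $z\in S_r$ forces $\sigma_r(\Phi_z)>0$, every point lies in some level; by monotonicity of box-counting dimension under inclusion, $\text{boxdim}(S_r^{(s)})\leq k_r$.

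With $r,R,s$ fixed, the core estimate runs as follows. Fix $\epsilon>0$ and cover $S_r^{(s)}$ by $N\leq\epsilon^{-(k_r+\eta)}$ balls $B(z_j,\epsilon)$ centered in $S_r^{(s)}$, where $\eta\in(0,\,r-k_r)$ is admissible because $k_r$ is defined as a $\limsup$. Using that $z\mapsto\Phi_z$ and $G_0$ are Lipschitz and that $\|\alpha\|\leq R$, if some $z\in B(z_j,\epsilon)\cap S_r^{(s)}$ satisfies $G_\alpha(z)=0$ then $\|\Phi_{z_j}\alpha+G_0(z_j)\|\leq C(R)\epsilon$, so the contributing $\alpha$ lie in the slab $U_j=\{\alpha : \|\alpha\|\leq R,\ \|\Phi_{z_j}\alpha+G_0(z_j)\|\leq C(R)\epsilon\}$. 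Reading off the singular value decomposition of $\Phi_{z_j}$, this slab is free (up to $R$) in the $t-r$ kernel directions and confined to an interval of length $\lesssim\epsilon/\sigma_r(\Phi_{z_j})\lesssim\epsilon\,2^{s}$ in each of the $r$ row-space directions, giving $\text{Leb}(U_j)\lesssim R^{\,t-r}(\epsilon\,2^{s})^{r}$. Summing over the $N$ balls yields $\text{Leb}\big(B_r\cap\{\|\alpha\|\leq R\}\ \text{from level }s\big)\lesssim\epsilon^{-(k_r+\eta)}R^{\,t-r}(\epsilon\,2^{s})^{r}=R^{\,t-r}2^{sr}\,\epsilon^{\,r-k_r-\eta}$. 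Since $r>k_r$ and $\eta<r-k_r$, the exponent $r-k_r-\eta$ is positive, so letting $\epsilon\to0$ shows this set is null.

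Finally I would assemble the pieces. For each fixed $r$ and $R$, the set $B_r\cap\{\|\alpha\|\leq R\}$ is a countable union over $s$ of null sets, hence null; taking the union over $R\in\mathbb{N}$ keeps it null; and taking the union over the finitely many ranks $r$ shows $B$ is null, i.e.\ $G_\alpha^{-1}(0)=\emptyset$ for almost every $\alpha\in\R{t}$. The crux of the whole argument is the dyadic conditioning stratification of the third paragraph, which is precisely what rescues the transport of $\epsilon$-covers through the ill-conditioned, non-Lipschitz dependence of the solution fibers on $z$; everything else is bookkeeping of covering numbers and elementary slab volumes.
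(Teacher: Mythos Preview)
Your argument is correct and follows the same covering-and-volume strategy the paper sketches (and defers to Sauer et al.\ for details): cover $S_r$ by $\mathcal{O}(\epsilon^{-k_r})$ balls, bound the $\alpha$-measure contributed by each ball as $\mathcal{O}(\epsilon^r)$, sum, and let $\epsilon\to0$. Your dyadic stratification $S_r^{(s)}$ by the level of $\sigma_r(\Phi_z)$ is exactly the technical device that makes the per-ball bound rigorous, since the implied constant in the paper's asserted ``$\epsilon^r$'' scaling otherwise depends on $1/\sigma_r(\Phi_{z_j})$ and is not uniform across the cover.
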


\begin{proof}
The proof of \cref{lemma:sauer} follows standard covering arguments and may be sketched as follows. From the dimensionality assumption, the set $S_r$ can be essentially covered by  $\mathcal{O}(\epsilon^{-k_r})$ $\epsilon$-balls. Furthermore, for any $z \in S_r$, the probability (measured with respect to $\alpha \in \R{t}$) that $G_{\alpha}(z)$ maps to the neighborhood of $0$ scales as $\epsilon^r$. Hence the probability of this happening for any of the points in the cover scales as $\epsilon^{r-k_r}$. If we take $r> k_r$ then the probability of such an event tends to zero as we shrink $\epsilon$. Full details can be found in the proofs in~\cite{sauer1991embedology}.
\end{proof}

We can now present the proof of~\Cref{theo: multiple op}:
\begin{proof}
In order to have model uniqueness, we require that the inferred signal set $\hat{\signalset}$ defined as
\begin{equation}
    \label{eq:inferred set2}
  \hat{\signalset} =  \{ v\in \mathbb{R}^{n} | \; A_g(x_{g} -v) = 0, \; g=1,\dots,\ntransf, x_1,\dots,x_{\ntransf}\in \signalset \}
\end{equation} 
equals the true set $\signalset$, or equivalently that their difference 
\begin{multline} \label{eq:diff inf}
    \hat{\signalset}\setminus{\signalset} = \{ v\in \R{n} \setminus{\signalset} | \;  A_1 (x_1 - v) = \dots =  
    A_{\ntransf} (x_{\ntransf} -v) = 0, \; x_1,\dots,x_{\ntransf}\in \signalset \}
\end{multline}
is empty, where $\setminus{}$ denotes set difference. Let $S\subset \R{n(\ntransf+1)}$ be the set of all vectors $z=[v,x_1,\dots,x_{\ntransf}]^{\top}$ with  $v\in \R{n}\setminus{\signalset}$ and $x_1,\dots,x_{\ntransf}\in \signalset$. The difference set defined in \Cref{eq:diff inf} is empty if and only if for any $z\in S$  we have
\begin{align}  \label{eq:multA}
  \underbrace{\begin{bmatrix}
  - A_1 & A_{1} &  & \\ 
   \vdots & & \ddots & \\ 
  - A_{\ntransf}& &  & A_{\ntransf}
 \end{bmatrix}}_{G_\alpha\in\R{m\ntransf \times n(\ntransf+1) }}
  \underbrace{\begin{bmatrix}
  v \\
   x_{1}\\
   \vdots \\ 
    x_{\ntransf}\\
 \end{bmatrix}}_{z\in S } &\neq 0 \\
 G_{\alpha}(z) &\neq 0
\end{align}
where $G_\alpha$ maps $z\in S$ to  $\R{m\ntransf}$. Let $\alpha = [\vect{A_1}^{\top},\dots,\vect{A_{\ntransf}}^{\top}]^{\top}\in \R{mn\ntransf}$, then as a function of $\alpha$ we can also write \Cref{eq:multA} as 
 \begin{equation} \label{eq: alpha mult}
  \begin{bmatrix}
   (x_1-v)^{\top} \otimes I_{m} &  & \\ 
   & \ddots & \\ 
   &  & (x_{\ntransf}-v)^{\top} \otimes I_{m}
 \end{bmatrix} \alpha \neq 0
\end{equation}
where $\otimes$ is the Kronecker product and we used the fact that $A(x_g-v)= (x_g-v)^{\top} \otimes I_{m} \vect{A}$. As $v$ does not belong to the signal set, the matrix on the left hand side of \Cref{eq: alpha mult} has rank $m\ntransf$ for all  $z\in S$. We treat the cases of bounded and conic signal sets separately, showing in both cases that, for almost every $\alpha\in \R{mn\ntransf}$, the condition in \Cref{eq: alpha mult} holds for all $z\in S$ if $m>k+n/\ntransf$:


\begin{description}
    \item[Bounded signal set] Let $S_\rho$ be a subset of $S$ defined as
    \begin{multline}
      S_\rho =  \{ z \in \R{n(\ntransf+1)} \; | z = [v^{\top},x_1^{\top},\dots,x_{\ntransf}^{\top} ]^{\top}, x_1,\dots,x_{\ntransf}\in \signalset,\; \|v\|_2\leq \rho \}.
    \end{multline}
    As $S_\rho$ is bounded, we have $\text{boxdim}(S_\rho)\leq k\ntransf+n$. Thus, if $m\ntransf>k\ntransf+n$,  \Cref{lemma:sauer} states that for almost every $\alpha$, \Cref{eq: alpha mult} holds for all $z\in S_\rho$. As $S$ can be decomposed as a countable union of $S_\rho$ of increasing radius, i.e., $S=\bigcup_{\rho\in\mathbb{N}} S_{\rho}$, and a countable union of events of measure zero has measure  zero, then for almost every $\alpha$ all $z\in S$ verifies \Cref{eq: alpha mult} if $m>k+n/\ntransf$.
    \item[Conic signal set] If the signal set is conic, then $S$ is also conic. Hence, due to the linearity of \Cref{eq:multA} with respect to $z$, there exists $z\in S$ which does not verify \Cref{eq:multA} if and only if for any bounded set $B$ containing an open neighbourhood of $0$, there exists a $z\in S\cap B$ which does not verify \Cref{eq:multA}. As $\text{boxdim}(S\cap B)\leq \ntransf k+n$, \Cref{lemma:sauer} states that for almost every $\alpha$, all $z\in S$ verifies \Cref{eq: alpha mult} as long as $m> k + n\ntransf$. 
\end{description}
\end{proof}

We end this section with the proof of~\Cref{prop: necessary multA}:
\begin{proof}
Consider the noisy measurements associated to the $g$th operator $A_g$, as $z=y+\epsilon$, where $z$ are the observed noisy measurements, $y$ are the clean measurements and $\epsilon$ is additive noise (independent of $y$). 
The characteristic function of the sum of two independent random variables is given by the multiplication of their characteristic functions, i.e.,
\begin{equation}
    \varphi_z(w) = \varphi_y(w) \varphi_\epsilon(w)
\end{equation}
where $\varphi_z$, $\varphi_y$ and $\varphi_\epsilon$ are the characteristic functions the noisy measurement, clean measurements and noise distributions, respectively. If the characteristic function of the noise distribution is nowhere zero, we can uniquely identify the characteristic function of the clean measurement distribution as 
\begin{equation}
 \varphi_y(w)   = \varphi_z(w)/ \varphi_\epsilon(w)
\end{equation}
The clean measurement distribution is fully characterized by its characteristic function $\varphi_y(w)$. We end the proof by noting that the same reasoning applies to the measurements of every operator $A_g$ with $g\in \{1,\dots,\ntransf\}$.
\end{proof}

\section{Training Details} \label{sec: training details}

Algorithm~\ref{algo:moi} provides the pseudo-code of the proposed multi-operator imaging (MOI) method. The training details for each task are as follows:

\begin{algorithm}[h]
\scriptsize
\SetAlgoLined
\ttfamily\textcolor{teal}{}\\
\ttfamily\textcolor{teal}{\#\hspace{1.5mm}$\mathcal{G}$:\hspace{1.5mm}forward operators $\mathcal{G}$=\{$1,\dots,G$\}}\\
\ttfamily\textcolor{teal}{\#\hspace{1.5mm}f:\hspace{1.5mm}reconstruction function (e.g., neural network)}\\
\ttfamily{for y, $\text{A}_\text{g}$ in loader:}\ttfamily\textcolor{teal}{\hspace{1.5mm}\#\hspace{1.5mm}load a minibatch y with N samples and its corresponding operator $\text{A}_\text{g}$}\\
\hspace*{4.3mm}\ttfamily\textcolor{teal}{\#\hspace{1.5mm}randomly select a operator from $\mathcal{G}/g$}\\
\hspace*{4.3mm}\ttfamily{s = select($\mathcal{G}/g$)}\\
\hspace*{4.3mm}\ttfamily{x1 = f(y, $\text{A}_\text{g}$)}\ttfamily\textcolor{teal}{\hspace{1.5mm}\#\hspace{1.5mm}reconstruct x from y}\\
\hspace*{4.3mm}\ttfamily{x2 = f($\text{A}_\text{s}$(x1), $\text{A}_\text{s}$)}\ttfamily\textcolor{teal}{\hspace{1.5mm}\#\hspace{1.5mm}reconstruct x1}\\
\hspace*{4.3mm}\ttfamily\textcolor{teal}{\#\hspace{1.5mm}MOI training loss, Eqn.(18)}\\
\hspace*{4.3mm}\ttfamily{loss = MSELoss($\text{A}_\text{g}$(x1), y)}\ttfamily\textcolor{teal}{\hspace{1.5mm}\#\hspace{1.5mm}measurement consistency}\\
\hspace*{11.75mm}\ttfamily{+ MSELoss(x2, x1)}\ttfamily\textcolor{teal}{\hspace{1.5mm}\#\hspace{1.5mm}cross-operator consistency}\\
\hspace*{4.3mm}\ttfamily\textcolor{teal}{\#\hspace{1.5mm}update f network}\\
\hspace*{4.3mm}\ttfamily{loss.backward()}\\
\hspace*{4.3mm}\ttfamily{update(f.params)}\\
\caption{Pseudocode of MOI in a PyTorch-like style.}\label{algo:moi}
\end{algorithm}

\paragraph{Compressed Sensing and Inpainting with MNIST.}
In both cases, we use the Adam optimizer with a batch size of $128$ and weight decay of $10^{-8}$. We use a fully connected network with 5 layers, where the number neurons in each layer are $784,1000,32,1000,784$ respectively. The nonlinearity is relu and the network has a residual connection between the input and output. 
For the CS task, we use an initial learning rate of $10^{-4}$ and train the networks for $1000$ epochs,  keeping the learning rate constant for the first 800 epochs and then shrinking it by a factor of $0.1$.  For the inpainting task, we use an initial learning rate of $5\times10^{-4}$ and train the networks for $500$ epochs, keeping the learning rate constant for the first $300$ epochs and then shrinking it by a factor of $0.1$.

\paragraph{Inpainting with CelebA.} The CelebA  dataset  contains more than 200K celebrity images, each with 40 binary attributes. We pick the attribute \emph{smile} to evaluate the proposed method. The center part of the aligned images in the CelebA dataset are cropped to $128\times128$. We divide the selected images into two subsets for training and testing. There are $32557$ images in each subset, which we split across $G=40$ operators. For the inpainting task, we use the U-Net architecture (see~\cref{fig:U-Net}) to implement the reconstruction function $f(y,A)$, and the DCGAN architecture for AmbientGAN. \rev{Using the U-Net architecture for AmbientGAN's generator obtains an average test PSNR of $27.5\pm 1.3$ dB, which is 2.1 dB below the performance obtained by the DCGAN generator reported in Section 6.} We use Adam with a batch size of $20$, an initial learning rate of $5\times10^{-4}$ and a weight decay of $10^{-8}$. We train the networks for $300$ epochs,  shrinking the learning rate by a factor of $0.1$ after the first $200$ epochs.

\begin{figure}
\centering
\includegraphics[width=0.45\textwidth]{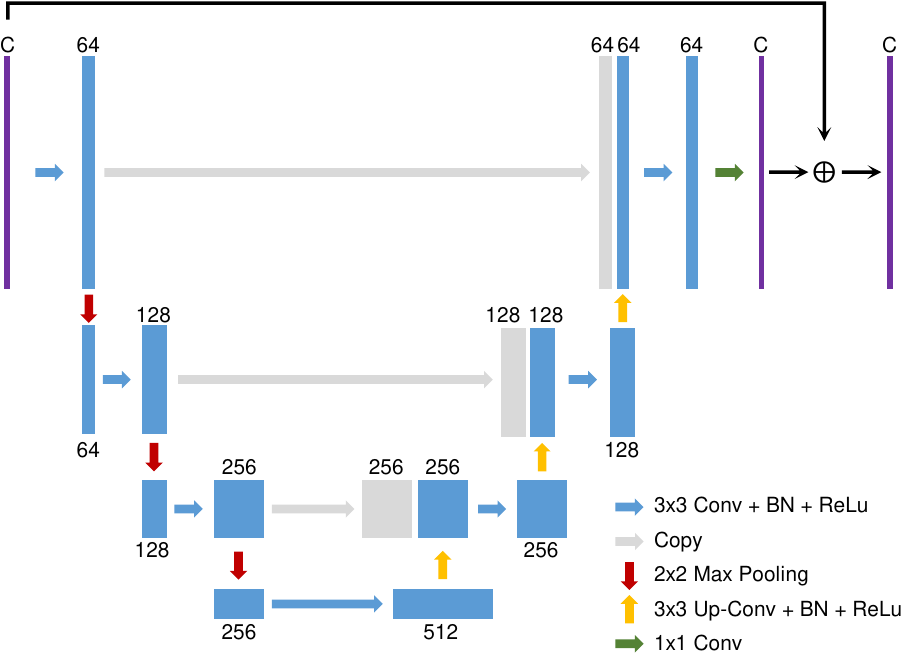}
\caption{The residual U-Net used in the paper. The number of input and output channels is denoted as $C$, such that $C=2$ for MRI and  $C=3$ for inpainting.}
\label{fig:U-Net}
\end{figure}

\paragraph{Accelerated MRI with fastMRI.} 

\JT{For the denoiser network $f(y,A)=\tilde{f}(A^{\dagger}y)$, we use the U-Net in~\cref{fig:U-Net} to implement $\tilde{f}$.
For the unrolled network, we unfold the proximal gradient descent (PGD) algorithm (see Algorithm~\cref{alg:pgd}) with $T=3$ iterations.} The step size is initialized as $\eta^{(t)}=0.4$ and is then learned during training. We employ 3 U-Net networks using the architecture in~\cref{fig:U-Net} to implement $f^{(t)}$ for $t=1,2,3$ (no weight sharing across PGD iterations).

\begin{equation}\label{alg:pgd}
\left\lfloor
  \begin{array}{llr}
  \text{Unrolled Proximal Gradient Descent (PGD)}& & \\
  \textbf{input: } y, A & & \\
  x^{(0)}\leftarrow A^{\dagger}y& & \\
  \text{for}~t=0,1,\cdots,T-1:&&\\
    x^{(t+1)}\leftarrow f^{(t+1)} (x^{(t)} - \eta^{(t)} A^{\top} (Ax^{(t)}-y))&&\\
  \text{end } \text{for}& &\\
  \text{return } f(y, A):=x^{(T)}& &\\
  \end{array}
\right.
\end{equation}

We train the networks for $500$ epochs \JT{with the Adam optimizer (batch size of 4), with initial learning rate $5\times 10^{-4}$ for} the first 300 epochs and then shrinking it by a factor of $0.1$. In all experiments, we use complex-valued data and treat real and imaginary parts of the images as separate channels.  For the purpose of visualization, we display only the magnitude images. \JT{Reconstructed test images for the denoiser and unrolled architectures are shown in~\cref{fig:mri_images,fig:mri_images_unrolled} respectively.}

 
 \begin{figure}[h]
\centering
\begin{minipage}{.195\linewidth}
\centerline{\includegraphics[width=.98\textwidth]{figures/nips/sm/unet/fig_a2a_mri_fbp.pdf}}
\end{minipage}
\begin{minipage}{.195\linewidth}
\centerline{\includegraphics[width=.98\textwidth]{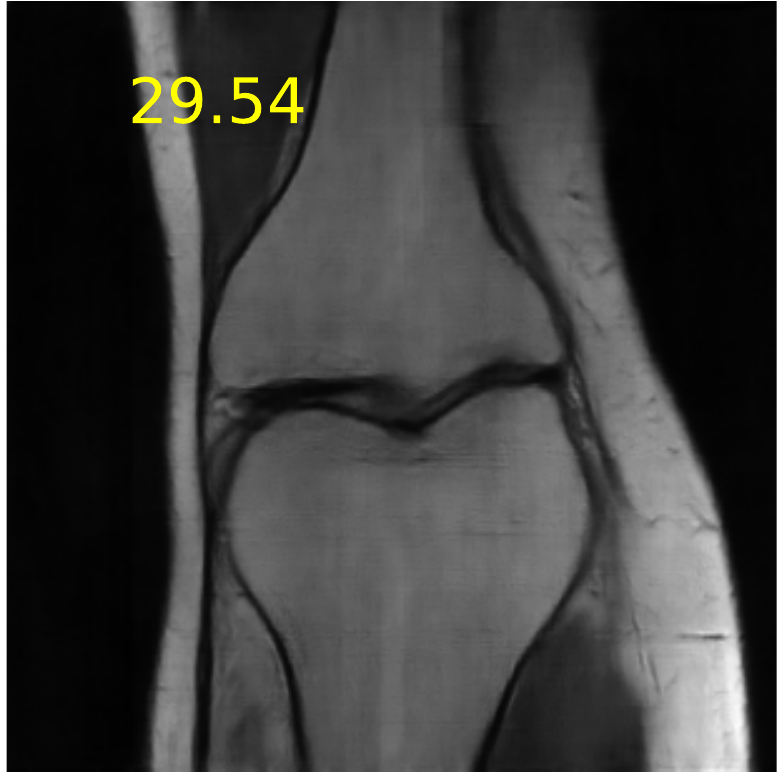}}
\end{minipage}
\begin{minipage}{.195\linewidth}
\centerline{\includegraphics[width=.98\textwidth]{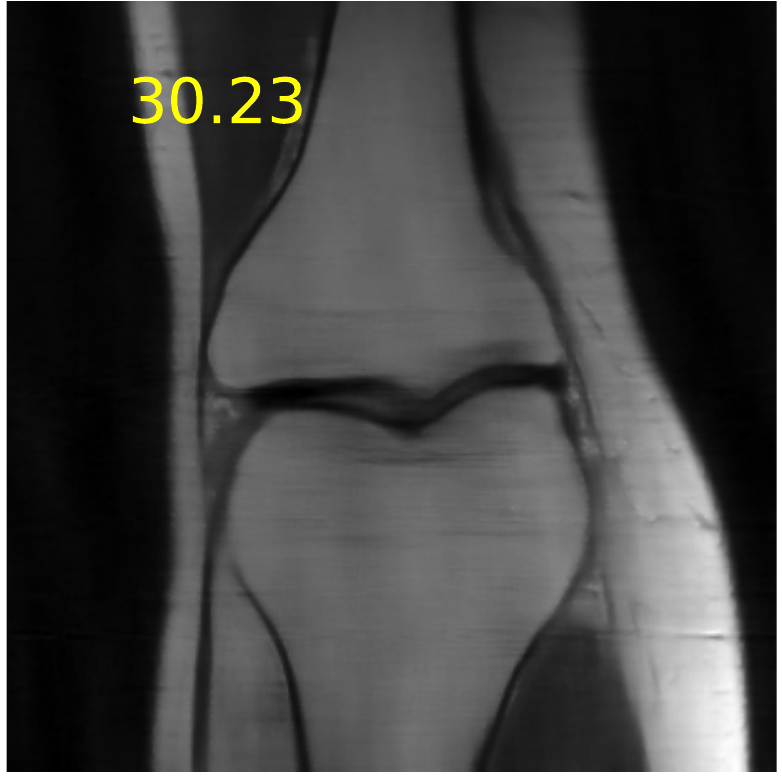}}
\end{minipage}
\begin{minipage}{.195\linewidth}
\centerline{\includegraphics[width=.98\textwidth]{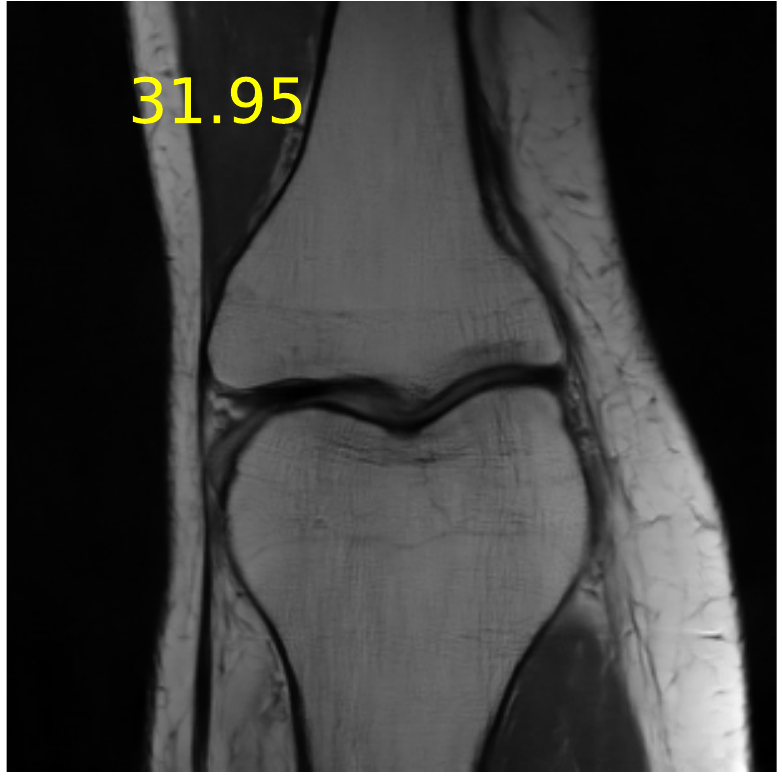}}
\end{minipage}
\begin{minipage}{.195\linewidth}
\centerline{\includegraphics[width=.98\textwidth]{figures/nips/sm/unet/fig_a2a_mri_gt.pdf}}
\end{minipage}

\caption{Examples of reconstructed test images for the accelerated MRI task using the fastMRI dataset using the denoiser architecture. From left to right: pseudo-inverse $A^{\dagger}y$, measurement splitting, MOI, supervised and ground-truth. 
}
\label{fig:mri_images}
\end{figure}

 \begin{figure*}[h]
\centering
\includegraphics[width=1\textwidth]{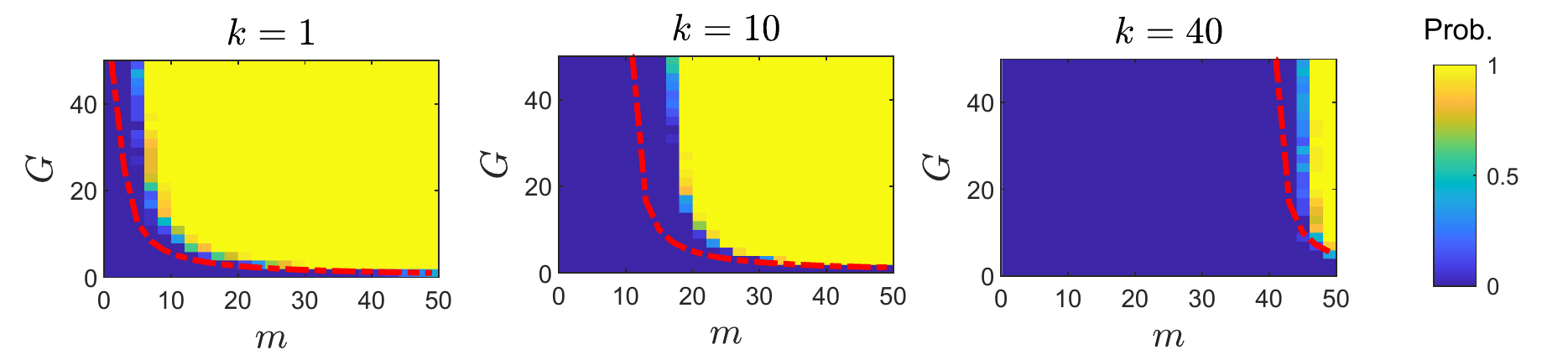}
\caption{Reconstruction probability of a $k$-dimensional subspace using incomplete measurements arising from $\ntransf$ independent operators for different $k$. The curve in red shows the bound of~\Cref{theo: multiple op}, $m>k+n/\ntransf$.}
\label{fig:multG_results}
\end{figure*}

\section{Additional Experiments} \label{sec: additional exp}
\subsection{Subspace Learning}\label{subsec: subspace}
We consider the problem of learning a $k$-dimensional subspace model from partial observations, where the signals $x_i$ are generated from a standard Gaussian distribution on the low-dimensional subspace. 
The observations $y_i$ are obtained by randomly choosing one out of $\ntransf$ operators $A_1,\dots,A_{\ntransf}\in\R{m \times n}$, each composed of iid Gaussian entries of mean 0 and variance $n^{-1}$. In order to recover the signal matrix $X = [x_1,\dots,x_N]$, we solve the following low-rank matrix recovery problem 
\begin{align}\label{eq:matrix comp}
    \arg\min_{X} \;&\|X\|_{*} \\
    \text{s.t. } A_{g_i}x_i &= y_i \quad \forall i=1,\dots,N \nonumber
\end{align}
where $\|\cdot\|_{*}$ denotes the nuclear norm. A recovery is considered successful if $\frac{\sum_i\|\hat{x}_i-x_i\|^{2}}{\sum_i\|x_i\|^{2}}<10^{-1}$, where $\hat{x}_i$ is the estimated signal for the $i$th sample. We use a standard matrix completion algorithm~\cite{cai2010singular} to solve~\cref{eq:matrix comp}. The ambient dimension is fixed at $n=50$, and the experiment is repeated for $k=1,10,40$. For each experiment we set $N=150k$ in order to have enough samples to estimate the subspaces~\cite{pimentel2016characterization}. \Cref{fig:multG_results} shows the probability of recovery over $25$ Monte Carlo trials for different numbers of measurements $m$ and operators $\ntransf$. The reconstruction probability exhibits a sharp transition which follows the bound presented in~\Cref{theo: multiple op}, i.e.,  $m>k+n/\ntransf$. 


\end{document}